\def\1{\bm{1}}
\def\eps{{\epsilon}}
\DeclareMathAlphabet{\mathsfit}{\encodingdefault}{\sfdefault}{m}{sl}
\SetMathAlphabet{\mathsfit}{bold}{\encodingdefault}{\sfdefault}{bx}{n}
\newcommand{\E}{\mathbb{E}}
\DeclareMathOperator*{\argmin}{arg\,min}
\newcommand{\indep}{\perp \!\!\! \perp}
\newcommand{\RR}{\mathbb{R}}
\newcommand{\trans}{^\top}
\DeclareMathOperator{\expect}{\mathbb{E}}
\newcommand{\brackets}[1]{\left(#1\right)}
\newcommand{\condE}[3][]{\expect_{#1}\left[#2\left|\,#3\right.\right]}
\newcommand{\dotprod}[2]{\left\langle#1,\,#2\right\rangle}
\newcommand{\tr}{\mathrm{Tr}}
\newcommand{\abs}[1]{\left|#1\right|}
\newcommand{\given}{\mid}
\newcommand{\changed}[1]{#1}
\newcommand{\changedmore}[1]{#1}
\newtheorem{theorem}{Theorem}[section]
\newtheorem{corollary}[theorem]{Corollary}
\newtheorem{lemma}[theorem]{Lemma}
\newtheorem{prop}[theorem]{Proposition}
\theoremstyle{definition}
\newtheorem{definition}[theorem]{Definition}
\newcommand{\dotp}{
    \mathop{
        \mathchoice{\vcenter{\hbox{\LARGE$\cdot$}}}
                   {\vcenter{\hbox{\LARGE$\cdot$}}}
                   {\vcenter{\hbox{\normalsize$\cdot$}}}
                   {\vcenter{\hbox{\small$\cdot$}}}
    }
}
\title{Efficient Conditionally Invariant Representation Learning}
\author{Roman Pogodin\thanks{Equal contribution. \changedmore{$^\dagger$Code for image data experiments is available at \href{https://github.com/namratadeka/circe}{github.com/namratadeka/circe}}} \\
Gatsby Unit, UCL\\
\texttt{\small rmn.pogodin@gmail.com} \\
\And
Namrata Deka$^*$ \\
UBC\\
\texttt{\small dnamrata@cs.ubc.ca} \\
\And
Yazhe Li$^*$ \\
DeepMind \& Gatsby Unit, UCL\\
\texttt{\small yazhe@google.com} \\
\AND
Danica J. Sutherland \\
UBC \& Amii\\
\texttt{\small dsuth@cs.ubc.ca}
\And
Victor Veitch \\
UChicago \&
Google Brain\\
\texttt{\small victorveitch@google.com}
\And
Arthur Gretton\\
Gatsby Unit, UCL\\
\texttt{\small arthur.gretton@gmail.com} \\
}
\begin{document}

\maketitle

\begin{abstract}
We introduce the Conditional Independence Regression CovariancE (CIRCE), a measure of conditional independence for multivariate continuous-valued variables. CIRCE applies as a regularizer in settings where we wish to learn neural features $\varphi(X)$ of data $X$ to estimate a target $Y$, while being conditionally independent of a distractor $Z$ given $Y$. Both $Z$ and $Y$ are assumed to be continuous-valued but relatively low dimensional, whereas $X$ and its features may be complex and high dimensional. Relevant settings include domain-invariant learning, fairness, and causal learning. The procedure requires just a single ridge regression from $Y$ to kernelized features of $Z$, which can be done in advance. It is then only necessary to enforce independence of $\varphi(X)$ from residuals of this regression, which is possible with attractive estimation properties and consistency guarantees. By contrast, earlier measures of conditional feature dependence require multiple regressions for each step of feature learning, resulting in more severe bias and variance, and greater computational cost. When sufficiently rich features are used, we establish that CIRCE is zero if and only if $\varphi(X) \indep Z \mid Y$. In experiments, we show superior performance to previous methods on challenging benchmarks, including learning conditionally invariant image features. %
\end{abstract}

\section{Introduction}

We consider a learning setting where we have labels $Y$ that we would like to predict from features $X$, and we additionally observe some metadata $Z$ that we would like our prediction to be `invariant' to. In particular, our aim is to learn a representation function $\varphi$ for the features such that $\varphi(X) \indep Z \given Y$. There are at least three motivating settings where this task arises.
\begin{enumerate}
    \item Fairness. In this context, $Z$ is some protected attribute (e.g., race or sex) and the condition $\varphi(X) \indep Z \given Y$ is the equalized odds condition \citep{Mehrabi:Morstatter:Saxena:Lerman:Galstyan:2019}. 
    \item Domain invariant learning. In this case, $Z$ is a label for the environment in which the data was collected (e.g., if we collect data from multiple hospitals, $Z_i$ labels the hospital that the $i$th datapoint is from).
    The condition $\varphi(X) \indep Z \given Y$ is sometimes used as a target for invariant learning \cite[e.g.,][]{long2018conditional, tachet2020domain, goel2020model, jiang:veitch:2022}.  \Citet{wang:veitch:2022} argue that this condition is well-motivated in cases where $Y$ causes $X$. 
    \item Causal representation learning. Neural networks may learn undesirable ``shortcuts'' for their tasks -- e.g., classifying images based on the texture of the background. To mitigate this issue, various schemes have been proposed to force the network to use causally relevant factors in its decision \citep[e.g.,][]{veitch2021counterfactual, Makar:Packer:Moldovan:Blalock:Halpern:DAmour:2022, puli2022outofdistribution}. The structural causal assumptions used in such approaches imply conditional independence relationships between the features we would like the network to use, and observed metadata that we may wish to be invariant to. These approaches then try to learn causally structured representations by enforcing this conditional independence in a learned representation.   
\end{enumerate}

In this paper, we will be largely agnostic to the motivating application, instead concerning ourselves with how to learn a representation $\varphi$ that satisfies the target condition. Our interest is in the (common) case where $X$ is some high-dimensional structured data -- e.g., text, images, or video -- and we would like to model the relationship between $X$ and (the relatively low-dimensional) $Y,Z$ using a neural network representation $\varphi(X)$.  
There are a number of existing techniques for learning conditionally invariant representations using neural networks (e.g., in all the motivating applications mentioned above). Usually, however, they rely on the labels $Y$ being categorical with a small number of categories. %
\changedmore{We} develop a method for conditionally invariant representation learning that is effective even when the labels $Y$ and attributes $Z$ are continuous or moderately high-dimensional.

To understand the challenge, it is helpful to contrast with the task of learning a representation $\varphi$ satisfying the marginal independence $\varphi(X) \indep Z$. To accomplish this, we might define a neural network to predict $Y$ in the usual manner, interpret the penultimate layer as the representation $\varphi$, and then add a regularization term
that penalizes some measure of dependence between $\varphi(X)$ and $Z$.
As $\varphi$ changes at each step,
we'd typically compute an estimate based on the samples in each mini-batch \citep[e.g.,][]{Beutel2019mindiff,veitch2021counterfactual}.
\changedmore{The challenge for extending this procedure to conditional invariance is simply that it's considerably harder to measure.} More precisely, as conditioning on $Y$ ``splits'' the available data,\footnote{If $Y$ is categorical, naively we would measure a marginal independence for each level of $Y$.} we require large samples to assess conditional independence. \changedmore{When regularizing neural network training, however, we only have the samples available in each mini-batch: often not enough for a reliable estimate.}

The main contribution of this paper is a technique that reduces the problem of learning a conditionally independent representation to the problem of learning a marginally independent representation, following a characterization of conditional independence due to  \citet{daudin1980partial}. 
We first construct a particular statistic $\zeta(Y,Z)$ such that enforcing the marginal independence $\varphi(X) \indep \zeta(Y,Z)$ is (approximately) equivalent to enforcing $\varphi(X) \indep Z \mid Y$. 
The construction is straightforward: given a fixed feature map $\psi(Y,Z)$ on $\mathcal{Y}\times\mathcal{Z}$ (which may be a kernel or random Fourier feature map), we define $\zeta(Y,Z)$ as the conditionally centered features,  $\zeta(Y,Z) = \psi(Y,Z) - \E[\psi(Y,Z)\mid Y]$.
We obtain a measure of conditional independence, the {\em Conditional Independence Regression CovariancE} (CIRCE), as the Hilbert-Schmidt Norm of the kernel covariance between 
$\varphi(X)$ and $\zeta(Y,Z)$. 
A key point is that the conditional feature mean $\E[\psi(Y,Z) \mid Y]$ can be estimated offline,
in advance of any neural network training, using standard methods \citep{SonHuaSmoFuk09,GruLevBalPatetal12,park2020measure,li2022optimal}. This makes CIRCE a suitable regularizer for any setting where the conditional independence relation $\varphi(X) \indep Z \mid Y$ should be enforced when learning $\varphi(X)$.
In particular, the learned relationship between $Z$ and $Y$ doesn't depend on the mini-batch size, sidestepping the tension between small mini-batches and the need for large samples to estimate conditional dependence.  Moreover, when sufficiently expressive features (those corresponding to a \emph{characteristic kernel}) are employed, then CIRCE is zero if and only if $\varphi(X) \indep Z \mid Y$: this result may be of broader interest, for instance in causal structure learning \cite{zhang2012kernel} and hypothesis testing \cite{FukGreSunSch08,shah2020hardness,huang2020kernel}.

Our paper proceeds as follows: in Section \ref{sec:CondIndepRegularizer}, we introduce the relevant  characterization of conditional independence from \citep{daudin1980partial}, followed by our CIRCE criterion -- we establish that CIRCE is indeed a measure of conditional independence, and provide a consistent empirical estimate with finite sample guarantees. Next, in Section \ref{sec:relatedWork}, we review alternative measures of conditional dependence. %
Finally, in Section \ref{sec:experiments}, we demonstrate CIRCE in two practical settings: a series of counterfactual invariance benchmarks due to \citet{quinzan2022learning}, and image data extraction tasks on which a ``cheat'' variable is observed during training.

\section{Efficient conditional independence regularizer}\label{sec:CondIndepRegularizer}

We begin by providing a general-purpose characterization of conditional independence. We then introduce CIRCE, a conditional independence criterion
based on this characterization, which  is zero if and only if conditional independence holds (under certain required conditions).
We provide a finite sample estimate with convergence guarantees, and strategies for efficient estimation from data.

\subsection{Conditional independence}
We begin with a natural definition of conditional independence for real random variables:
\begin{definition}[\citealp{daudin1980partial}]
$X$ and $Z$ are $Y$-conditionally independent,
$X \indep Z \mid Y$,
if for all test functions 
$g\in L^2_{XY}$ and $h\in L^2_{ZY}$,
i.e.\ for all square-integrable functions of $(X, Y)$ and $(Z, Y)$ respectively,
we have almost surely in $Y$ that
\begin{equation}
    \condE[XZ]{g(X, Y) \, h(Z, Y)}{Y} = \condE[X]{g(X, Y)}{Y}\,\condE[Z]{h(Z, Y)}{Y}
.\end{equation}
\end{definition}

The following classic result
provides an equivalent formulation:
\begin{prop}[\citealp{daudin1980partial}]
\label{def:cond_indep_daudin}
$X$ and $Z$ are $Y$-conditionally independent if and only if
it holds for all test functions
$g \in E_1 = \left\{ g \in L^2_{XY} \mid \condE[X]{g(X, Y)}{Y}=0 \right\}$
and $h \in E_2 = \left\{ h\in L^2_{ZY} \mid \condE[Z]{h(Z, Y)}{Y}=0 \right\}$ that
\begin{equation}
    \expect[g(X, Y) \, h(Z, Y)] = 0
.\end{equation}
\end{prop}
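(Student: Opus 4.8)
The plan is to establish both implications directly from the definition of conditional independence, treating the two directions separately, with the reverse implication being the substantive one. The forward direction is immediate: assume $X \indep Z \mid Y$ and take any $g \in E_1$ and $h \in E_2$. Since $\condE[X]{g(X,Y)}{Y} = 0$ and $\condE[Z]{h(Z,Y)}{Y} = 0$ almost surely, the factorization in the definition gives $\condE[XZ]{g(X,Y)\,h(Z,Y)}{Y} = \condE[X]{g(X,Y)}{Y}\,\condE[Z]{h(Z,Y)}{Y} = 0$ almost surely. Taking expectations over $Y$ and applying the tower property then yields $\expect[g(X,Y)\,h(Z,Y)] = 0$, as required.

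For the reverse implication, I would start from an arbitrary pair $\tilde g \in L^2_{XY}$ and $\tilde h \in L^2_{ZY}$ and reduce to the centered functions $g = \tilde g - \condE{\tilde g}{Y}$ and $h = \tilde h - \condE{\tilde h}{Y}$. Because $\condE{\tilde g}{Y}$ is a function of $Y$ alone it lies in $L^2_{XY}$, and conditional expectation is an $L^2$ contraction, so $g \in L^2_{XY}$; moreover $\condE{g}{Y} = 0$, so $g \in E_1$, and symmetrically $h \in E_2$. The goal is to recover the conditional factorization of the definition for $\tilde g, \tilde h$, which by a direct expansion is equivalent to showing $\condE{g\,h}{Y} = 0$ almost surely.

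The crux is upgrading the hypothesis, which only asserts that the \emph{unconditional} expectation $\expect[g\,h]$ vanishes, to the stronger \emph{almost-sure conditional} statement. To do this I would exploit that $E_1$ is closed under multiplication by bounded functions of $Y$: for any bounded measurable $a$, the function $a(Y)\,g$ still lies in $L^2_{XY}$ and satisfies $\condE{a(Y)\,g}{Y} = a(Y)\,\condE{g}{Y} = 0$, so $a(Y)\,g \in E_1$. Applying the hypothesis to the pair $(a(Y)\,g,\,h)$ and using the tower property gives $\expect[a(Y)\,\condE{g\,h}{Y}] = 0$ for every bounded $a$, which forces $\condE{g\,h}{Y} = 0$ almost surely.

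It then remains to expand $\condE{g\,h}{Y}$ and pull out the $Y$-measurable factors $\condE{\tilde g}{Y}$ and $\condE{\tilde h}{Y}$; the cross terms collapse and one is left with $\condE{\tilde g\,\tilde h}{Y} = \condE{\tilde g}{Y}\,\condE{\tilde h}{Y}$ almost surely, which is exactly the defining factorization. I expect the main obstacle to be this almost-sure upgrade rather than the algebra: it is what genuinely uses the richness of the test-function class (all of $E_1$, not just a fixed pair), and care is needed to confirm that $a(Y)\,g$ remains square-integrable. Integrability of the remaining products is otherwise routine, following from Cauchy--Schwarz since the factors lie in $L^2$.
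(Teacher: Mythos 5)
Your proof is correct and takes essentially the same route as the paper's: center arbitrary $L^2$ functions to land in $E_1\times E_2$, expand, and then exploit the richness of the test class under multiplication by functions of $Y$ to upgrade the unconditional identity $\expect[g\,h]=0$ to the almost-sure factorization. The only cosmetic difference is that you multiply the already-centered $g$ by bounded functions $a(Y)$ (so it stays in $E_1$ trivially), whereas the paper multiplies the uncentered $g$ by indicators $I_B$ of Borel sets in the image of $Y$ and re-runs the centering computation; the two devices are interchangeable.
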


\citet{daudin1980partial} notes that 
this condition can be further simplified (see \cref{app:cor:cond_indep_final} for a proof):
\begin{prop}[Equation 3.9 of \citealt{daudin1980partial}]
\label{def:cond_indep_daudin_partial}
$X$ and $Z$ are $Y$-conditionally independent if and only if it holds
for all $g\in L^2_X$ and $h\in E_2 = \left\{ h\in L^2_{ZY} \mid \condE[Z]{h(Z, Y)}{Y}=0 \right\}$ that
\begin{equation}
    \expect[g(X) \, h(Z, Y)] = 0
.\end{equation}
\end{prop}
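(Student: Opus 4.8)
The plan is to assume the characterization in \cref{def:cond_indep_daudin} and reduce the present claim to an equivalence between the two conditions. Writing (P2) for ``$\E[g(X, Y)\,h(Z, Y)]=0$ for all $g\in E_1$, $h\in E_2$'' (the condition of \cref{def:cond_indep_daudin}) and (P3) for ``$\E[g(X)\,h(Z, Y)]=0$ for all $g\in L^2_X$, $h\in E_2$'' (the condition here), it suffices to prove (P2) $\iff$ (P3); conditional independence then chains through \cref{def:cond_indep_daudin} in both directions. The only real content is that one may drop the centering requirement on the first test function and restrict it to depend on $X$ alone.

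For (P2) $\Rightarrow$ (P3), fix $g\in L^2_X\subseteq L^2_{XY}$ and $h\in E_2$, and split $g$ into its conditional mean and its conditionally centered part: set $a(Y):=\condE{g(X)}{Y}$ and $\tilde g:=g(X)-a(Y)$. Conditional Jensen gives $a\in L^2_Y$, so $\tilde g\in L^2_{XY}$ with $\condE{\tilde g}{Y}=0$, i.e.\ $\tilde g\in E_1$. Then $\E[g\,h]=\E[\tilde g\,h]+\E[a(Y)\,h]$, where the first term vanishes by (P2) and the second equals $\E\!\left[a(Y)\condE{h}{Y}\right]=0$ because $h\in E_2$; all expectations are finite by Cauchy--Schwarz. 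This gives (P3).

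For (P3) $\Rightarrow$ (P2), I would prove the slightly stronger statement that (P3) forces $\E[g(X, Y)\,h(Z, Y)]=0$ for every $g\in L^2_{XY}$ and $h\in E_2$, which contains (P2) as the case $g\in E_1$. The key observation is that $Y$-dependence can be absorbed into the second slot: for bounded $b\in L^2_Y$ and $h\in E_2$, the product $b(Y)h(Z, Y)$ again lies in $E_2\cap L^2_{ZY}$, since $\condE{b(Y)h(Z, Y)}{Y}=b(Y)\condE{h}{Y}=0$ and $\E[(b(Y)h)^2]\le\|b\|_\infty^2\,\E[h^2]<\infty$. Hence for a product test function $a(X)b(Y)$ with $a\in L^2_X$, (P3) gives $\E[a(X)\,b(Y)\,h(Z, Y)]=\E\!\left[a(X)\cdot\big(b(Y)h(Z, Y)\big)\right]=0$. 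Using that finite linear combinations of such products (e.g.\ indicators of measurable rectangles) are dense in $L^2_{XY}$, I would approximate an arbitrary $g\in L^2_{XY}$ in $L^2$-norm and pass to the limit against $h$ via Cauchy--Schwarz, using $\|h\|_{L^2}<\infty$.

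The main obstacle is precisely this density-and-limit step in the converse: one must verify that products $a(X)b(Y)$, with $b$ taken bounded so that $b(Y)h\in E_2\cap L^2_{ZY}$ is unproblematic, have dense span in $L^2_{XY}$ (the standard monotone-class argument, since rectangle indicators generate the product $\sigma$-algebra), and then justify interchanging the $L^2$-limit of the approximants with the expectation against $h$. The forward direction and the absorption identity are routine once the decomposition above is set up.
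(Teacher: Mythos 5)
Your proposal is correct and takes essentially the same route as the paper's proof: the nontrivial (sufficiency) direction in both arguments restricts to product test functions on $\mathcal X\times\mathcal Y$ (rectangle indicators in the paper, $a(X)b(Y)$ with $b$ bounded in yours), absorbs the $Y$-factor into the $E_2$ function --- your observation that $b(Y)h\in E_2$ is exactly what justifies the paper's step $\condE{g_i h_1}{Y}=0$ --- and finishes by density of such products in $L^2_{XY}$ plus a Cauchy--Schwarz limit passage, which is the paper's simple-function argument spelled out in more detail. The only cosmetic difference is the forward direction, which you derive from the $E_1$/$E_2$ characterization via the decomposition $g=\bigl(g-\condE{g}{Y}\bigr)+\condE{g}{Y}$, whereas the paper reads it off directly from the definition of conditional independence; both are routine and valid.
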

An equivalent way of writing this last condition (see \cref{prop:l2-class} for a formal proof) is:
\begin{equation}
    \label{eq:daudin-centered}
    \text{for all } g \in L_X^2
    \text{ and } h \in L_{ZY}^2,
    \quad
    \expect\left[
        g(X) \,
        \Bigl( h(Z, Y) - \condE[Z']{h(Z', Y)}{Y} \Bigr)
    \right]
    = 0
.\end{equation}

The reduction to $g$ not depending on $Y$ is crucial for our method:
when we are learning the representation $\varphi(X)$, 
then evaluating the conditional expectations $\condE[X]{g(\varphi(X),Y)}{Y}$ from \cref{def:cond_indep_daudin} on every minibatch in gradient descent requires impractically many samples,
but $\condE[Z]{h(Z,Y)}{Y}$ does not depend on $X$ and so can be pre-computed before training the network.

\subsection{Conditional Independence Regression CovariancE (CIRCE)}
The characterization \eqref{eq:daudin-centered} of conditional independence is still impractical, as it requires checking all pairs of square-integrable functions $g$ and $h$.
We will now transform this condition into an easy-to-estimate measure that characterizes conditional independence, using kernel methods.

A \textit{kernel} $k(x,x')$ is a symmetric positive-definite function $k\!:\!\mathcal X\! \times\! \mathcal X\! \to\! \mathbb R$. A kernel can be represented as an inner product $k(x,x')=\dotprod{\phi(x)}{\phi(x')}_{\mathcal{H}}$ for a \textit{feature vector} $\phi(x)\in\mathcal{H}$, where $\mathcal{H}$ is a reproducing kernel Hilbert space (RKHS).
These are spaces $\mathcal{H}$ of functions $f\!:\!\mathcal X\! \to\! \mathbb R$, with the key \emph{reproducing property} $\dotprod{\phi(x)}{f}_{\mathcal H} = f(x)$ for any $f\!\in\!\mathcal H$. \changedmore{For $M$ points we denote $K_{X\dotp}$ a row vector of $\phi(x_i)$, such that $K_{Xx}$ is an $M\times 1$ matrix with $k(x_i, x)$ entries and $K_{XX}$ is an $M\times M$ matrix with $k(x_i,x_j)$ entries.} For two separable Hilbert spaces $\mathcal{G}, \mathcal{F}$, a Hilbert-Schmidt operator $A:\mathcal{G}\rightarrow \mathcal{F}$ is a linear operator with a finite Hilbert-Schmidt norm
\begin{equation}
    \|A\|_{\mathrm{HS}(\mathcal{G}, \mathcal{F})}^2 = \sum\nolimits_{j\in J}\|Ag_j\|_{\mathcal{F}}^2\,,
\end{equation}
where $\{g_j\}_{j\in J}$ is an orthonormal basis of $\mathcal{G}$ (for finite-dimensional Euclidean spaces, obtained from a linear kernel, $A$ is just a matrix and $\lVert A \rVert_{\mathrm{HS}}$ its Frobenius norm).
The Hilbert space $\mathrm{HS}(\mathcal{G}, \mathcal{F})$ includes in particular the rank-one operators $\psi\otimes \phi$ for $\psi\in\mathcal{F}$, $\phi\in\mathcal{G}$, representing outer products,
\begin{equation}
    [\psi \otimes \phi] g
    = \psi \dotprod{\phi}{g}_{\mathcal{G}}
    ,\qquad
    \dotprod{A}{\psi\otimes \phi}_{\mathrm{HS}(\mathcal{G}, \mathcal{F})} = \dotprod{\psi}{A\,\phi}_{\mathcal{F}}\,.
\end{equation}
See \citet[Lecture 5]{Gretton22} for further details.

We next introduce a kernelized operator which (for RKHS functions $g$ and $h$) reproduces the condition in \eqref{eq:daudin-centered},
which we call
the Conditional Independence Regression CovariancE (CIRCE).
\begin{definition}[CIRCE operator]\label{def:circe}
Let $\mathcal G$ be an RKHS with feature map $\phi : \mathcal X \to \mathcal G$,
and $\mathcal F$ an RKHS with feature map $\psi : (\mathcal Z \times \mathcal Y) \to \mathcal F$,
with both kernels bounded: $\sup_x \lVert \phi(x) \rVert < \infty$, $\sup_{z, y} \lVert \psi(z, y) \rVert < \infty$.
Let $X$, $Y$, and $Z$ be random variables taking values in $\mathcal X$, $\mathcal Y$, and $\mathcal Z$ respectively.
The \emph{CIRCE operator} is
\begin{equation} \label{eq:circe-op}
    C_{X Z \mid Y}^c
    = \expect \left[ \phi(X) \otimes \bigl( \psi(Z, Y) - \condE[Z']{\psi(Z', Y)}{Y} \bigr) \right]
    \in \mathrm{HS}(\mathcal G, \mathcal F)
.\end{equation}

\label{def:cond_cov_operator}
\end{definition}

For any two functions $g\in\mathcal{G}$ and $h\in\mathcal{F}$, \cref{def:cond_cov_operator} gives rise to the same expression as in \eqref{eq:daudin-centered},
\begin{align}
    \dotprod{C_{X Z \mid Y}^c}{g\otimes h}_{\mathrm{HS}}
\,%
& = \expect\left[ g(X) \brackets{h(Z, Y) - \condE[Z']{h(Z', Y)}{Y}}\right]
\,.\label{eq:cov_operator_property}
\end{align}

The assumption that the kernels are bounded in \cref{def:circe} guarantees Bochner integrability \citep[Def. A.5.20]{SteChr08}, which allows us to exchange expectations with inner products as above: the argument is identical to that of \citet[Lecture 5]{Gretton22}  for the case of the unconditional feature covariance.
For unbounded kernels, Bochner integrability  can still hold under appropriate conditions on the distributions over which we take expectations, e.g.\ a linear kernel works if the mean exists, and energy distance kernels may have well-defined feature (conditional) covariances when relevant moments exist \citep{SejSriGreFuk13}.

Our goal now is to define a kernel statistic which is zero iff the CIRCE operator $C_{X Z \mid Y}^c$ is zero.
One option would be to seek
the functions,
subject to a bound such as $\|g\|_{\mathcal{G}}\le 1$ and $\|f\|_{\mathcal{F}}\le 1$,
that maximize \eqref{eq:cov_operator_property};
this would correspond to computing the largest singular value of $C_{X Z \mid Y}^c$. 
For unconditional covariances, the equivalent statistic
corresponds to the Constrained Covariance, whose computation requires solving an eigenvalue problem
\citep[e.g.][Lemma 3]{GreHerSmoBouetal05}.  We instead follow the same procedure as for unconditional kernel dependence measures, and replace the spectral
norm with the Hilbert-Schmidt norm \citep{gretton2005measuring}: both are zero when $C_{X Z \mid Y}^c$  is zero, but as we will see in \cref{sec:CIRCE_estimate} below,
the Hilbert-Schmidt norm has a simple closed-form empirical expression, requiring no optimization.

Next, we show that for rich enough RKHSes $\mathcal G,\mathcal F$ (including, for instance, those with a Gaussian kernel), the Hilbert-Schmidt norm of  $C_{X Z \mid Y}^c$ characterizes conditional independence. 
\begin{theorem}
For $\mathcal{G}$ and $\mathcal{F}$ with
$L^2$-universal kernels \citep[see, e.g.,][]{SriFukLan11},
\begin{equation}
   \|C_{X Z \mid Y}^c\|_{\mathrm{HS}}=0
   \quad\text{if and only if}\quad
   X \indep Z \mid Y.
\end{equation}
\label{th:cond_cov_means_indep}
\end{theorem}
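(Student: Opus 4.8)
The plan is to leverage the operator property \eqref{eq:cov_operator_property}, which translates the Hilbert--Schmidt geometry of $C_{X Z \mid Y}^c$ directly into the Daudin-style expectations of \eqref{eq:daudin-centered}. The pivotal observation is that $\|C_{X Z \mid Y}^c\|_{\mathrm{HS}} = 0$ is equivalent to $C_{X Z \mid Y}^c = 0$, which in turn (since finite linear combinations of rank-one operators $g \otimes h$ are dense in $\mathrm{HS}(\mathcal{G},\mathcal{F})$) is equivalent to $\dotprod{C_{X Z \mid Y}^c}{g \otimes h}_{\mathrm{HS}} = 0$ for all $g \in \mathcal{G}$ and $h \in \mathcal{F}$. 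By \eqref{eq:cov_operator_property}, this says precisely that $\expect[g(X)(h(Z,Y) - \condE[Z']{h(Z',Y)}{Y})] = 0$ for every pair of RKHS functions $g, h$. The target condition \eqref{eq:daudin-centered} is the same vanishing statement but quantified over all $g \in L^2_X$ and $h \in L^2_{ZY}$, so the whole theorem reduces to showing these two quantifier ranges are equivalent.

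For the forward direction I would note that boundedness of the kernels (assumed in \cref{def:circe}) makes every RKHS function bounded, hence square-integrable: for $g \in \mathcal{G}$, $|g(x)| = |\dotprod{g}{\phi(x)}_{\mathcal{G}}| \le \|g\|_{\mathcal{G}}\, \sup_x \|\phi(x)\| < \infty$, and similarly for $h \in \mathcal{F}$. Thus $\mathcal{G} \subseteq L^2_X$ and $\mathcal{F} \subseteq L^2_{ZY}$, so assuming $X \indep Z \mid Y$, \eqref{eq:daudin-centered} immediately forces the RKHS-restricted form to vanish, giving $C_{X Z \mid Y}^c = 0$.

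The reverse direction is where $L^2$-universality enters, and it is the crux of the argument. Here I would fix the bilinear functional $B(g,h) = \expect[g(X)(h(Z,Y) - \condE[Z']{h(Z',Y)}{Y})]$ and establish that it is jointly continuous for the $L^2_X \times L^2_{ZY}$ topology. Continuity in $g$ follows from Cauchy--Schwarz, $|B(g,h)| \le \|g\|_{L^2_X}\,\|h(Z,Y) - \condE[Z']{h(Z',Y)}{Y}\|_{L^2}$; continuity in $h$ follows because $h \mapsto h - \condE[Z']{h(Z',Y)}{Y}$ is the orthogonal projection in $L^2_{ZY}$ onto the subspace of functions with zero conditional mean given $Y$, hence has operator norm at most $1$, giving $|B(g,h)| \le \|g\|_{L^2_X}\,\|h\|_{L^2_{ZY}}$. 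By $L^2$-universality, $\mathcal{G}$ is dense in $L^2_X$ and $\mathcal{F}$ is dense in $L^2_{ZY}$; combining density with continuity of $B$, the vanishing of $B$ on the dense product $\mathcal{G} \times \mathcal{F}$ extends to all of $L^2_X \times L^2_{ZY}$, and \eqref{eq:daudin-centered} (equivalently \cref{def:cond_indep_daudin_partial}) then yields $X \indep Z \mid Y$.

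I expect the main obstacle to be this density argument, specifically making precise which measures govern the cited universality: $\mathcal{G}$ must be dense in $L^2(P_X)$ and $\mathcal{F}$ dense in $L^2(P_{ZY})$, so the universality must be taken with respect to exactly these marginal and joint laws, not some ambient Lebesgue measure. One must also confirm that the centering map remains a contraction in the $L^2_{ZY}$ limit (needed for continuity of $B$ in $h$), which the orthogonal-projection bound handles cleanly. Once these measure-theoretic points are pinned down, the identification of bilinear forms and the density-plus-continuity closure are routine.
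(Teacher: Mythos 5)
Your proposal is correct and takes essentially the same route as the paper's proof: your density-plus-continuity argument for the reverse direction (Cauchy--Schwarz in $g$, the contraction property of the conditional-expectation/centering map in $h$, then $L^2$-universality to pass from $\mathcal{G}\times\mathcal{F}$ to $L^2_X\times L^2_{ZY}$, and finally \cref{def:cond_indep_daudin_partial}) is precisely the paper's explicit $\eps$-approximation argument packaged abstractly as boundedness of the bilinear form. The only cosmetic difference is in the easy direction, where you invoke Daudin's characterization restricted to RKHS elements together with density of rank-one operators in $\mathrm{HS}(\mathcal{G},\mathcal{F})$, whereas the paper directly factors the operator under conditional independence; both are immediate.
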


\vspace{-15pt}
The ``if'' direction is immediate from the definition of $C_{XZ\mid Y}^c$.
The ``only if'' direction uses the fact that
the RKHS is
dense in $L^2$%
, and therefore if \eqref{eq:cov_operator_property} is zero for all RKHS elements, it must be zero for all $L^2$ functions.
See \cref{app:sec:kernel_proofs} for the proof.
Therefore, minimizing an empirical estimate of $\lVert C_{X Z \mid Y}^c \rVert_{\mathrm{HS}}$ will approximately enforce the conditional independence we need.

\begin{definition} For convenience, we define $\mathrm{CIRCE}(X,Z,Y)=\|C_{X Z \mid Y}^c\|_{\mathrm{HS}}^2$.
\label{def:circe_final}
\end{definition}
In the next two sections, we construct a differentiable estimator of this quantity from samples.

\subsection{Empirical CIRCE estimate and its use as a conditional independence regularizer}\label{sec:CIRCE_estimate}
\changed{
To estimate CIRCE, we first need to estimate the conditional expectation $\mu_{ZY|\,Y}(y)=\condE[Z]{\psi(Z,y)}{Y=y}$. 
We define\footnote{We abuse notation in using $\psi$ to denote feature maps of $(Y,Z),$ $Y,$ and $Z$; in other words, we use the argument of the feature map to specify the feature space, to simplify notation.}
$\psi(Z,Y) = \psi(Z)\otimes\psi(Y),$ which for radial basis kernels (e.g. Gaussian, Laplace) is $L_2$-universal for  $(Z,Y)$.\footnote{\citet[Section 2.2]{FukGreSunSch08} show this kernel is \emph{characteristic}, and \citet[Figure 1 (3)]{SriFukLan11} that being characteristic implies $L_2$ universality in this case.}  Therefore, $\mu_{ZY|\,Y}(y)=\condE[Z]{\psi(Z)}{Y=y}\otimes\psi(y)=\mu_{Z|\,Y}(y) \otimes \psi(y)$. The CIRCE operator can be written as 
\begin{align}
    C_{X Z \mid Y}^c
    = \expect \left[ \phi(X) \otimes \psi(Y) \otimes \bigl( \psi(Z) - \mu_{Z \mid Y}(Y) \bigr) \right]
\end{align}
}

We need two datasets to compute the estimator: a holdout set of size $M$ used to estimate conditional expectations, and the main set of size $B$ (e.g., a mini-batch).
The holdout dataset is used to estimate conditional expectation $\mu_{ZY|\,Y}$ with kernel ridge regression. This requires choosing the ridge parameter $\lambda$ and the kernel parameters for $Y$. We obtain both of these using leave-one-out cross-validation; we derive a closed form expression for the error by generalizing the result of \citet{bachmann2022generalization} to RKHS-valued ``labels'' for regression (see \cref{th:loo_kme}).

The following theorem defines an empirical estimator of the Hilbert-Schmidt norm of the empirical CIRCE operator, and establishes the consistency of this statistic as the number of training samples $B$, $M$ increases.   The proof \changedmore{and a formal description of the conditions} may be found in \cref{app:subsec:circe_est}

\changed{
\begin{theorem}The following estimator of CIRCE for $B$ points and $M$ holdout points (for the conditional expectation):
\begin{equation}
    \widehat{\textrm{CIRCE}} =  \frac{1}{B(B-1)}\tr\brackets{K_{XX} \left(K_{YY}\odot \hat K_{ZZ}^c\right)}\,.
\end{equation}
converges as $O_p(1/\sqrt{B} + 1/M^{\changedmore{(\beta-1) / (2(\beta+p))}})$, when the regression in \cref{eq:kme_loss} is well-specified. $K_{XX}$ and $K_{YY}$ are kernel matrices of $X$ and $Y$; elements of $K_{ZZ}^c$ are defined as $K_{zz'}^c  = \dotprod{\psi(z) - \mu_{Z \mid Y}(y)}{\psi(z') - \mu_{Z \mid Y}(y')}$\changedmore{; $\beta\in(1,2]$ characterizes how well-specified the solution is and $p \in (0, 1]$ describes the eigenvalue decay rate of the covariance operator over $Y$.}
\label{th:estimator}
\end{theorem}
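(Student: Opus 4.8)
The plan is to treat $\widehat{\textrm{CIRCE}}$ as a plug-in (V-statistic) estimator and to split its deviation from the population value into a \emph{sampling} error (finite batch $B$, true conditional mean) and a \emph{regression} error (the conditional mean embedding $\mu_{Z \mid Y}$ replaced by its ridge estimate $\hat\mu$ from the $M$ holdout points). First I would rewrite the target. Expanding $\|C_{X Z \mid Y}^c\|_{\mathrm{HS}}^2$ via the bilinearity in \eqref{eq:cov_operator_property} and the factorization of inner products of tensor products gives, over two independent copies $(X,Y,Z)$, $(X',Y',Z')$,
\begin{equation*}
  \textrm{CIRCE} = \E\big[\, k_X(X,X')\, k_Y(Y,Y')\, \langle \psi(Z)-\mu_{Z \mid Y}(Y),\, \psi(Z')-\mu_{Z \mid Y}(Y')\rangle \,\big].
\end{equation*}
With this, $\tfrac{1}{B(B-1)}\tr\!\big(K_{XX}(K_{YY}\odot \hat K_{ZZ}^c)\big)$ is exactly the V-statistic of the symmetric kernel $h\big((x,y,z),(x',y',z')\big)=k_X(x,x')\,k_Y(y,y')\,\langle \psi(z)-\hat\mu(y),\,\psi(z')-\hat\mu(y')\rangle$, with $\hat\mu$ substituted for $\mu_{Z \mid Y}$.

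Because the holdout set (size $M$) is independent of the batch (size $B$), I would condition on the holdout and treat $\hat\mu$ as a fixed, bounded function. Introducing the intermediate quantity $\textrm{CIRCE}(\hat\mu)$ — the population functional above with $\hat\mu$ in place of $\mu_{Z \mid Y}$ — the triangle inequality gives
\begin{equation*}
  |\widehat{\textrm{CIRCE}} - \textrm{CIRCE}| \le \underbrace{|\widehat{\textrm{CIRCE}} - \textrm{CIRCE}(\hat\mu)|}_{\text{(I) sampling}} \;+\; \underbrace{|\textrm{CIRCE}(\hat\mu) - \textrm{CIRCE}(\mu_{Z \mid Y})|}_{\text{(II) regression}}.
\end{equation*}
For term (I): conditioned on the holdout, $h$ is a fixed, bounded, symmetric kernel (boundedness of $\phi$, $\psi$, and of the ridge estimate secures this), so its off-diagonal pairs form a U-statistic that is unbiased for $\textrm{CIRCE}(\hat\mu)$ and, being nondegenerate with finite variance, concentrates at the standard rate $O_p(1/\sqrt B)$; the $B$ diagonal terms, scaled by $1/(B(B-1))$, contribute only $O(1/B)=o(1/\sqrt B)$. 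This yields the $1/\sqrt B$ term.

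For term (II): writing $\Delta(y)=\hat\mu(y)-\mu_{Z \mid Y}(y)$ and expanding the inner product of $\psi(Z)-\mu_{Z \mid Y}(Y)-\Delta(Y)$ against its primed copy produces two cross terms linear in $\Delta$ and one quadratic term. Bounding the bounded kernels by constants and applying Cauchy--Schwarz reduces each to a power of the regression error $\|\hat\mu-\mu_{Z \mid Y}\|$ measured in a suitable norm, the linear terms dominating for small error. I would then import the kernel-ridge-regression rate for the RKHS-valued conditional-mean-embedding problem — precisely the bound obtained by generalizing \citet{bachmann2022generalization} to RKHS-valued targets (our \cref{th:loo_kme}, with $\lambda$ tuned as a function of $M$) — which, under the well-specified source condition $\beta\in(1,2]$ and eigenvalue decay $p\in(0,1]$, gives $\|\hat\mu-\mu_{Z \mid Y}\| = O_p\big(M^{-(\beta-1)/(2(\beta+p))}\big)$. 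Substituting delivers the second term of the stated rate, and combining (I) and (II) completes the argument.

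The main obstacle I anticipate is term (II), specifically getting the exponent right. This requires (a) identifying the correct norm in which the propagated conditional-mean error must be controlled: the relevant pointwise/interpolation-space norm is stronger than the bare $L^2(P_Y)$ norm, and it is this strengthening that produces the factor $\beta-1$ rather than $\beta$ in the numerator (the rate degenerating as $\beta\to 1$); and (b) establishing the RKHS-valued KRR rate itself, including the bias--variance balance in $\lambda$ and verification of the well-specifiedness and capacity assumptions for the vector-valued regression. By contrast, the sampling term (I) is comparatively routine once the boundedness of $h$ — hence finiteness of the U-statistic variance — is secured.
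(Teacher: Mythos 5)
Your high-level strategy is the same as the paper's: split the error into a finite-batch sampling term and a conditional-mean-estimation term, control the first by concentration and the second by propagating the kernel-ridge-regression error through Cauchy--Schwarz, then add the two bounds. The paper's intermediate quantity is the empirical statistic evaluated with the \emph{true} $\mu_{Z\mid Y}$ (\cref{app:lemma:circe_true_cond}, proved via McDiarmid's inequality, plus \cref{app:lemma:circe_est_cond} for the substitution error), whereas you pass through the \emph{population} functional $\mathrm{CIRCE}(\hat\mu)$ and use U-statistic concentration conditionally on the holdout set; both routes are sound and deliver the same two terms at the same rates.

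The one genuine flaw is where you propose to get the regression rate. You plan to import $\|\hat\mu-\mu_{Z\mid Y}\| = O_p\bigl(M^{-(\beta-1)/(2(\beta+p))}\bigr)$ from ``the bound obtained by generalizing \citet{bachmann2022generalization} to RKHS-valued targets'' (\cref{th:loo_kme}). That theorem is not a convergence result: it is an algebraic leave-one-out identity used only to select $\lambda$ and the $Y$-kernel bandwidth cheaply, and no rate can be extracted from it. The rate in the statement comes from Theorem 2 of \citet{li2022optimal}, applied with interpolation exponent $\gamma=1$ (error measured in the vector-valued RKHS norm $\|\cdot\|_{\mathcal G}$) and $\lambda_M$ tuned as a power of $M$; the hypotheses (SRC with $\beta$, EVD with $p$) are exactly the $(\beta,p)$-kernel conditions of the appendix, and $\gamma=1$ is precisely what produces $\beta-1$ in the numerator. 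On that last point your instinct is half right: RKHS-norm (rather than $L^2(P_Y)$) control is forced in the paper's proof because $F-\hat F$ is bounded \emph{entrywise} at the batch points via the reproducing property of the vector-valued RKHS; in your population-level term (II), Cauchy--Schwarz actually needs only $L^2(P_Y)$ control of $\Delta$, and the stronger norm is instead what keeps your U-statistic kernel in term (I) uniformly bounded. Either accounting yields the stated rate, but the rate itself must be cited from (or re-proved along the lines of) \citet{li2022optimal}, not derived from the leave-one-out lemma.
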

}

\changedmore{%
The notation $O_p(A)$ roughly states that with any constant probability, the estimator is $O(A)$.
}

\textbf{Remark.} \changedmore{For the smoothly well-specified case we have $\beta = 2$, and for a Gaussian kernel $p$ is arbitrarily close to zero (for $Y$ being uniformly distributed over a bounded set $E_Y\subset \RR^d$ with a Lipschitz boundary; see \cite{li2022optimal}, Remark 8), giving a rate $O_p(1/\sqrt{B} + 1/M^{1/4})$.}  The $1/M^{1/4}$ rate comes from conditional expectation estimation, where it is minimax-optimal for the well-specified case  \citep{li2022optimal}. Using kernels whose eigenvalues decay slower than the Gaussian's would slow the convergence rate \citep[see][Theorem 2]{li2022optimal}.

The algorithm is summarized in \cref{alg:circe}.
We can further improve the computational complexity for large training sets with random Fourier features \citep{rff}; see \cref{app:sec:rff}.

\begin{algorithm}[h]
\caption{Estimation of CIRCE}\label{alg:circe}
\begin{algorithmic}
\State Holdout data $\{(z_i, y_i)\}_{i=1}^M$, mini-batch $\{(x_i, z_i, y_i)\}_{i=1}^B$
\State \textbf{Holdout data}
\State Leave-one-out (\cref{th:loo_kme}) for $\lambda$ \changedmore{(ridge parameter)} and  $\sigma_y$ \changedmore{(parameters of $Y$ kernel)}:
\State $\lambda,\  \sigma_y = \argmin \sum_{i=1}^M \frac{\left\|\psi(z_i) - \changed{K_{y_iY}\brackets{K_{YY} + \lambda I}^{-1}K_{Z\,\dotp}} \right\|^2_{\mathcal{H}_z}}{\brackets{1 - \brackets{K_{YY}\brackets{K_{YY} + \lambda\,I}^{-1}}_{ii}}^2}$
\State $W_1=\brackets{K_{YY} + \lambda I}^{-1},\ W_2 = W_1K_{ZZ}W_1$  %
\State \textbf{Mini-batch}
\State Compute kernel matrices $\changed{K_{xx}, K_{yy},}K_{yY},K_{yZ}$ ($x,y,z$: mini-batch, $Y,Z$: holdout)
\State $\hat K^c = K_{yy}\odot \brackets{K_{zz} - K_{yY}W_1K_{Zz} - \brackets{K_{yY}W_1K_{Zz}}\trans + K_{yY}W_2K_{Yy}}$
\State $\mathrm{CIRCE}=\frac{1}{B(B-1)}\tr\brackets{K_{xx}\hat K^c}$
\end{algorithmic}
\label{algo:circe}
\end{algorithm}

We can use of our empirical CIRCE as a regularizer for conditionally independent regularization learning, where the goal is to learn representations that are conditionally independent of a known distractor $Z$. We switch from $X$ to an \textit{encoder} $\varphi_\theta(X)$. If the task is to predict $Y$ using some loss $L(\varphi_\theta(X), Y)$, the CIRCE regularized loss with the regularization weight $\gamma > 0$ is as follows:
\begin{equation}
    \min_\theta L(\varphi_\theta(X), Y) + \gamma\,\mathrm{CIRCE}(\varphi_\theta(X), Z, Y)\,.
\end{equation}

\section{Related work}\label{sec:relatedWork}

We review prior work on kernel-based measures of conditional independence to determine or enforce $X \indep Z|\,Y,$ including those measures we compare against in our experiments in \Cref{sec:experiments}.
We begin with procedures based on kernel conditional feature covariances.
The conditional kernel cross-covariance was first introduced as a measure of conditional dependence by \citet{SunJanSchFuk07}.
Following this work,  a  kernel-based conditional independence test (KCI) was proposed by \citet{zhang2012kernel}. The latter test relies on satisfying \cref{def:cond_indep_daudin} leading to a statistic%
\footnote{\changed{The conditional-independence test statistic used by KCI is $\frac{1}{B}\text{Tr}\left(\Tilde{K}_{\ddot{X}|Y}\Tilde{K}_{Z|Y}\right)$, where $\ddot{X} = (X,Y)$ and $\Tilde{K}$ is a centered kernel matrix. Unlike CIRCE, $\Tilde{K}_{\ddot{X}|Y}$ requires regressing $\ddot{X}$ on $Y$ using kernel ridge regression.}} 
that requires regression of $\varphi(X)$ on $Y$ in every minibatch (as well as of $Z$ on $Y$, as in our setting).
More recently, \cite{quinzan2022learning} introduced  a variant of the Hilbert-Schmidt Conditional Independence Criterion \citep[HSCIC;][]{park2020measure} as a regularizer to learn a generalized notion of counterfactually-invariant representations \citep{veitch2021counterfactual}.
Estimating $\text{HSCIC}(X,Z|Y)$ from finite samples requires estimating the conditional mean-embeddings $\mu_{X,Z|Y}$, $\mu_{X|Y}$ and $\mu_{Z|Y}$ via regressions \citep{GruLevBalPatetal12}. HSCIC requires three times as many regressions as  CIRCE, of which two must be done online in minibatches to account for the conditional cross-covariance terms involving $X$.  We will compare against HSCIC in experiements, being representative of this class of methods, and having been employed successfully in a setting similar to ours.

Alternative measures of conditional independence make use of additional normalization over the measures described above.
The Hilbert-Schmidt norm of the {\em normalized} cross-covariance was introduced as a test statistic for conditional independence by  \citet{FukGreSunSch08}, and was used for structure identification in directed graphical models.
\citet{huang2020kernel} proposed using the ratio of the \textit{maximum mean discrepancy} (MMD) between $P_{X|ZY}$ and $P_{X|Y}$, and the MMD between the Dirac measure at $X$ and $P_{X|Y},$ as a measure of the conditional dependence between $X$ and $Z$ given $Y$.
The additional normalization terms in these statistics can result in favourable asymptotic properties when used in statistical testing.
This comes at the cost of increased computational complexity, and reduced numerical stability when used as regularizers on minibatches.

Another approach, due to  \citet{shah2020hardness},
is the Generalized Covariance Measure (GCM). This is a normalized version of the covariance between residuals from kernel-ridge regressions of $X$ on $Y$ and $Z$ on $Y$ (in the multivariate case, a maximum over covariances between univariate regressions is taken). As with the approaches discussed above,  the GCM also involves multiple regressions -- one of which (regressing $X$ on $Y$) cannot be done offline.
Since the regressions are univariate, and since  GCM simply regresses $Z$ and $X$ on $Y$ (instead of $\psi(Z,Y)$ and $\phi(X)$ on $Y$), we anticipate that GCM
might provide better regularization than HSCIC on minibatches. This comes at a cost, however, since by using regression residuals rather than conditionally centered features, there will be instances of conditional dependence that will not be detectable.  We will investigate this further in our experiments.

\section{Experiments}\label{sec:experiments}
We conduct experiments addressing two settings: (1) synthetic data of moderate dimension, to study effectiveness of CIRCE at enforcing conditional independence under established settings (as envisaged for instance in econometrics or epidemiology); and (2) high dimensional image data, with the goal of learning image representations that are robust to domain shifts. We compare performance over all experiments with HSCIC \citep{quinzan2022learning} and GCM \citep{shah2020hardness}.

\subsection{Synthetic Data}

\begin{wrapfigure}{r}{0.3\textwidth}
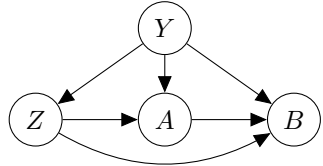

\vspace{-25pt}
\begin{center}
    \tikz{
 \node[latent] (a) {$Z$};%
 \node[latent,right=of a] (x) {$A$};%
 \node[latent,right=of x] (y) {$B$};%
 \node[latent,above=of x,yshift=-0.5cm] (z) {$Y$}; %
 \edge {z} {a,x,y};%
 \edge {a} {x};%
 \edge {x} {y};%
 \path (a) edge [bend right,->]  (y);%
 }
\end{center}
\caption{Causal structure for synthetic datasets.}
\label{fig:scm_synthetic}
\vspace{-15pt}
\end{wrapfigure}
We first evaluate performance on the synthetic datasets proposed by  \citet{quinzan2022learning}:  these use the structural causal model (SCM) shown in \Cref{fig:scm_synthetic}, and  comprise 2 univariate and 2 multivariate cases (see \Cref{app:synthetic} for details). Given samples of $A$, $Y$ and $Z$, the goal is to learn a predictor $\hat{B}=\varphi(A,Y,Z)$ that is counterfactually invariate to $Z$. Achieving this requires enforcing conditional independence $\varphi(A, Y, Z) \indep Z | Y$. 
For all experiments on synthetic data, we used a fully connected network with 9 hidden layers. The inputs of the network were $A$, $Y$ and $Z$. The task is to predict $B$ and the network is learned with the MSE loss. 
For each test case, we generated 10k examples, where 8k were used for training and 2k for evaluation. Data were normalized with zero mean and unit standard deviation. 
\changed{The rest of experimental details is provided in \cref{app:synthetic}.}

We report in-domain MSE loss, and measure the level of counterfactual invariance of the predictor using the VCF \citep[eq. 4; lower is better]{quinzan2022learning}. Given $X=(A, Y, Z)$, 
\begin{align}
\text{VCF} := \mathbb{E}_{x \sim \mathbf{X}}\left[\mathbb{V}_{z' \sim \mathbf{Z}}\left[\mathbb{E}_{\hat{B}^*_{z{'}}|X}\left[\hat{B}|X=x\right]\right]\right]\,.
\end{align}
$P_{\hat{B}^*_{z{'}}|X}$ is the counterfactual distribution of $\hat{B}$ given $X=x$ and an intervention of setting $z$ to $z{'}$.

\paragraph{Univariate Cases}
\Cref{tab:univariate_case2} summarizes the in-domain MSE loss and VCF comparing CIRCE to baselines. Without regularization, MSE loss is low in-domain but the representation is not invariant to changes of $Z$. With regularization, all three methods successfully achieve counterfactual invariance in these simple settings, and exhibit similar in-domain performance.

\begin{table}[h]
    \centering
    \small
    \begin{tabular}{lcccccccc}
    \toprule
    \multirow{2}{*}{Case}&\multicolumn{2}{c}{No Reg}&\multicolumn{2}{c}{GCM}&\multicolumn{2}{c}{HSCIC}&\multicolumn{2}{c}{CIRCE}\\
    \cmidrule(lr){2-3}\cmidrule(lr){4-5}\cmidrule(lr){6-7}\cmidrule(lr){8-9}
    &MSE&VCF&MSE&VCF&MSE&VCF&MSE&VCF\\    
    \midrule
    1&2.03e-4&0.180&0.198&2.59e-06&0.197&2.08e-11&0.197&8.77e-08\\
    2&0.027&0.258&1.169&9.07e-07&1.168&3.08e-11&1.168&7.37e-11\\
    \bottomrule
    \end{tabular}
    \caption{MSE loss and VCF for univariate synthetic datasets. Comparison of representation without conditional independence regularization against regularization with GCM, HSCIC and CIRCE.}
    \label{tab:univariate_case2}
    \vspace{-0.5cm}
\end{table}
\paragraph{Multivariate Cases}
\changed{We present results on 2 multivariate cases: case 1 has high dimensional $Z$ and  case 2 has high dimensional $Y$. }For each multivariate case, we vary the number of dimensions $d=\{2,5,10,20\}$. To visualize the trade-offs between in-domain performance and invariant representation, we plot the Pareto front of MSE loss and VCF. 
\changed{With high dimensional $Z$ (\Cref{fig:pareto_multi1}A),} CIRCE and HSCIC have a similar trade-off profile, however it is notable that GCM needs to sacrifice more in-domain performance to achieve the same level of invariance. This may be because the GCM statistic is a maximum over normalized covariances of univariate residuals, which can be less effective in a multivariate setting. \changed{For high dimensional $Y$ (\Cref{fig:pareto_multi1}B), the regression from $Y$ to $\psi(Z)$ is much harder. We observe that HSCIC becomes less efficient with increasing $d$ until at $d=20$ it fails completely, while GCM still sacrifices more in-domain performance than CIRCE.}
\begin{figure}[h]
    \centering
    \includegraphics[width=0.85\textwidth]{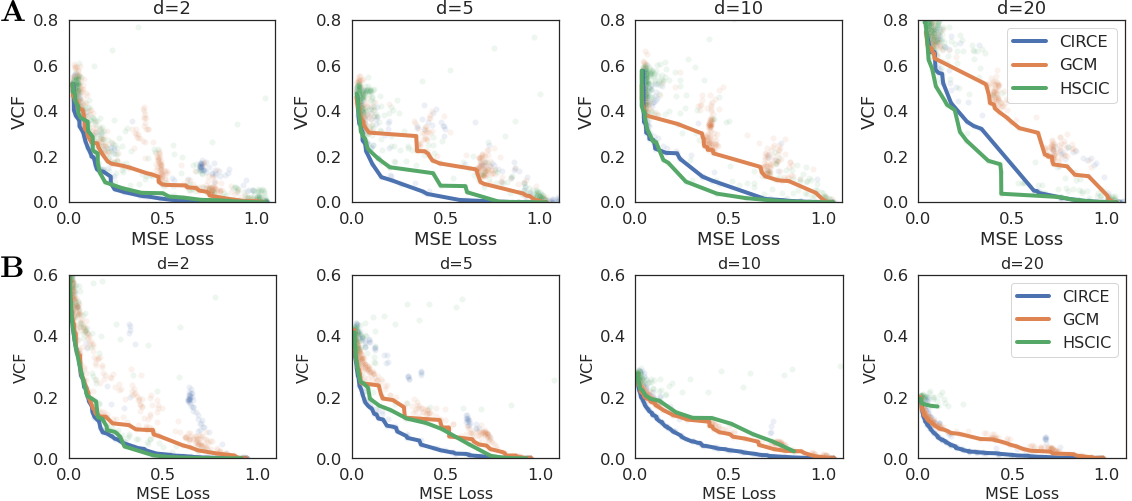}
    \caption{Pareto front of MSE and VCF for multivariate synthetic dataset. \textbf{A}: case 1; \textbf{B}: case 2.}
    \label{fig:pareto_multi1}
    \vspace{-0.5cm}
\end{figure}

\subsection{Image Data}
\begin{wrapfigure}{r}{0.3\textwidth}
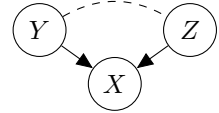

\vspace{-12pt}
\begin{center}
    \tikz{
 \node[latent] (x) {$X$};%
 \node[latent,above=of x, xshift=-1cm,yshift=-1cm] (y) {$Y$};%
 \node[latent,above=of x, xshift=1cm,yshift=-1cm] (z) {$Z$};%
 \edge {z} {x};%
 \edge {y} {x};%
 \draw[dashed] (y) to[out=30,in=150] (z);
 }
\end{center}
\caption{Causal structure for dSprites and  Yale-B. Dashed line denotes a non-causal association between nodes.}
\label{fig:scm_image}
\vspace{-20pt}
\end{wrapfigure}
We next evaluate our method on two high-dimensional image datasets: d-Sprites (\cite{dsprites17}) which contains images of 2D shapes generated from six independent latent factors; and the Extended Yale-B Face dataset \footnote{Google and DeepMind do not have access or handle the Yale-B Face dataset.}(\cite{GeBeKr01}) of faces of 28 individuals under varying camera poses and illumination. We use both  datasets with the causal graph in Figure~\ref{fig:scm_image} where the image $X$ is directly caused by the target variable $Y$ and a distractor $Z$. There also exists a strong non-causal association between $Y$ and $Z$ in the training set (denoted by the dashed edge).

The basic setting is as follows: for the in-domain (train) samples, the observed $Y$ and $Z$ are correlated through the true $Y$ as
\changed{
\begin{alignat}{3}
    &Y \sim P_Y,\ \xi_z \sim \mathcal{N}(0,\sigma_z)\,, & Z\, &= \beta(Y) + \xi_z\,, & &\\
    &Y' \,= Y + \xi_y\,,\xi_y \sim \mathcal{N}(0,\sigma_y)\,,\quad & Z' \, &= f_z(Y, Z, \xi_z)\,,\quad     & X \, &= f_x(Y', Z')\,.
\end{alignat}
$Y$ and $Z$ are observed; $f_z$ is the structural equation for $Z'$ (in the simplest case $Z'=Z$); $f_x$ is the generative process of $X$. $Y'$ and $Z'$ represent noise added during generation and are unobserved. 
}

A regular predictor would take advantage of the association $\beta$ between $Z$ and $Y$ during training, since this is a less noisy source of information on $Y$. For unseen out-of-distribution (OOD) regime, where $Y$ and $Z$ are uncorrelated, such solution would be incorrect.%

Therefore, our task is to learn a predictor $\hat{Y}\! =\! \varphi(X)$ that is conditionally independent of $Z$: $\varphi({X})\indep Z |\, Y$, so that during the OOD/testing phase when the association between $Y$ and $Z$ ceases to exist, the model performance is not harmed as it would be if $\varphi(X)$ relied on the ``shortcut" $Z$ to predict $Y$.
For all image experiments we use the AdamW (\cite{Loshchilov2019DecoupledWD}) optimizer and anneal the learning rate with a cosine scheduler (details in \Cref{app:image_exp}). We select the hyper-parameters of the optimizer and scheduler via a grid search to minimize the in-domain validation set loss.

\subsubsection{dSprites}
Of the six independent generative factors in d-Sprites, we choose the $y$-coordinate of the object as our target $Y$ and the $x$-coordinate of the object in the image as our distractor variable $Z$. 
Our neural network consists of three convolutional layers interleaved with max pooling and leaky ReLU activations, followed by three fully-connected layers with 128, 64, 1 unit(s) respectively. 

\textbf{Linear dependence}
We sample images from the dataset as per the linear relation $\changedmore{Z'} = Z = Y + \xi_z$. We then translate all sampled images (both in-domain and OOD) vertically by $\xi_y$, resulting in an observed object coordinate of $(Z, Y + \xi_y)$.
In this case, linear residual methods, such as GCM, are able to sufficiently handle the dependence as the residual $Z - \condE{Z}{Y}=\xi_z$ is correlated with $Z$ -- which is the observed $x$-coordinate. As a result, penalizing the cross-covariance between 
$\varphi(X) - \condE{\varphi(X)}{Y}$ and $Z - \condE{Z}{Y}$ will also penalize the network's dependence on the observed $x$-coordinate to predict $Y$.
\begin{figure}[h]
    \centering
    \includegraphics[width=0.9\textwidth]{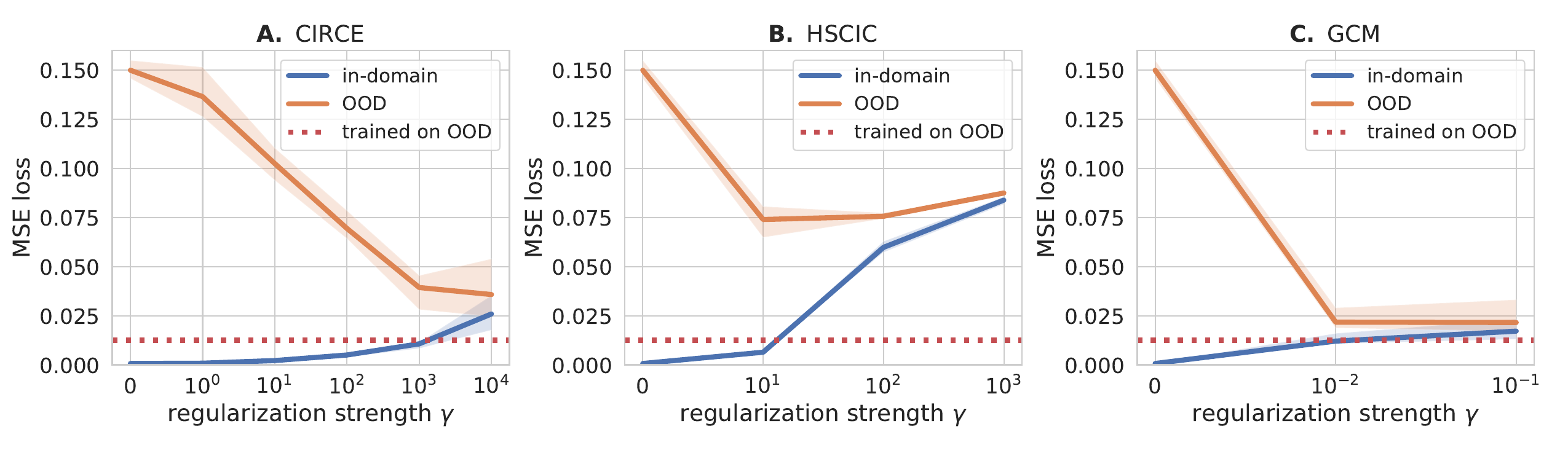}
    \caption{dSprites (linear). Blue: in-domain test loss; orange: out-of-domain loss (OOD); red: loss for OOD-trained encoder. Solid lines: median over 10 seeds; shaded areas: min/max values.}
    \label{fig:circe_lin}
    \vspace{-0.4cm}
\end{figure}

In \cref{fig:circe_lin} we plot the in-domain and OOD losses over a range of regularization strengths and demonstrate that indeed GCM is able to perform quite well with a linear function relating $Z$ to $Y$. CIRCE is comparable to GCM with strong regularization and outperforms HSCIC. To get the optimal OOD baseline we train our network on an OOD training set where $Y$ and $Z$ are uncorrelated.

\textbf{Non-linear dependence}
 To demonstrate the limitation of GCM, which simply regresses $Z$ on $Y$ instead of $\psi(Z,Y)$ on $Y$, we next address a more complex nonlinear dependence \changed{$\beta(Y)=0$ and $Z'=Y + \alpha\,Z^2$.}
The observed coordinate of the object in the image is \changed{$(Y + \alpha \xi_z^2, Y + \xi_y)$}
. For a small $\alpha$, the unregularized network will again exploit the shortcut, i.e. the observed $x$-coordinate, in order to predict $Y$.
\changed{
The linear residual, if we don't use features of $Z$, is $Z - \condE{Z}{Y}=\xi_z$, which is uncorrelated with $Y + \alpha \xi_z^2$,}
 because $\expect\,[\xi_z^3]=0$ 
 due to the symmetric and zero-mean distribution of $\xi_z$.
 As a result, penalizing cross-covariance with the linear residual (as done by GCM) will not penalize solutions that use the observed $x$-coordinate to predict $Y$. \changed{Whereas CIRCE which uses a feature map $\psi(Z)$ can capture higher order features.}
Results are shown in \cref{fig:circe_nonlin}: we see again that CIRCE performs best, followed by HSCIC, with GCM doing poorly. Curiously, GCM performance
does still improve slightly on OOD data as regularization increases - we conjecture that \changed{the encoder $\varphi(X)$ may extract non-linear features of the coordinates.}
\changed{However, GCM is numerical unstable for large regularization weights}, which might arise from combining a ratio normalization and a max operation in the statistic.

\begin{figure}[h]
    \centering
    \includegraphics[width=0.9\textwidth]{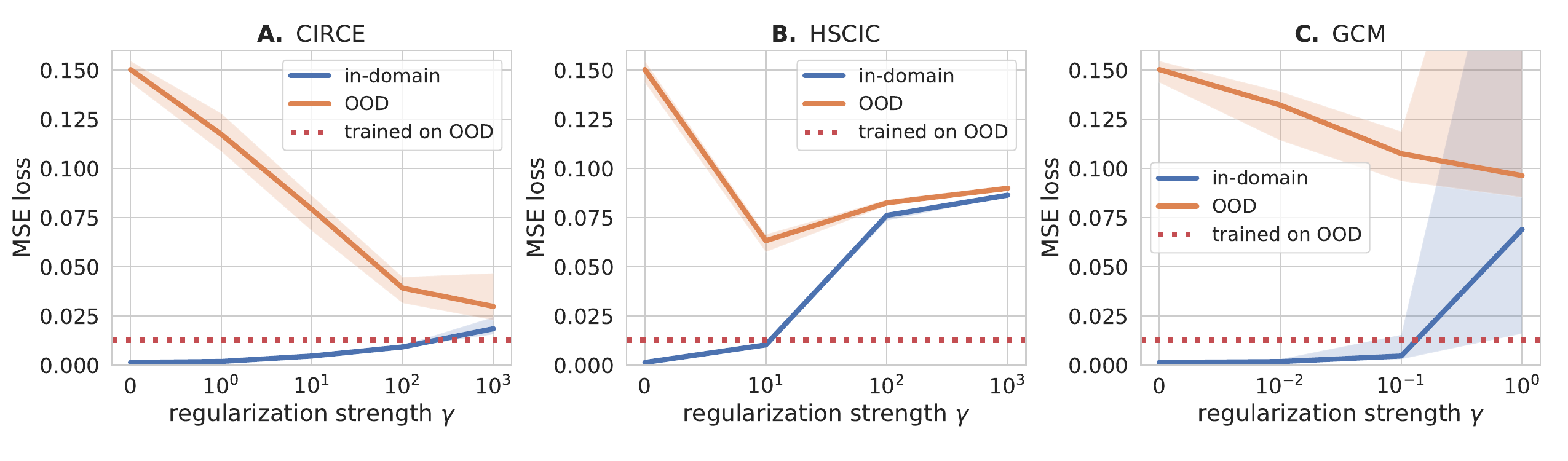}
    \caption{dSprites (non-linear). Blue: in-domain test loss; orange: out-of-domain loss (OOD); red: loss for OOD-trained encoder. Solid lines: median over 10 seeds; shaded areas: min/max values.}
    \label{fig:circe_nonlin}
\end{figure}

\subsubsection{Extended Yale-B}
Finally, we evaluate CIRCE as a regressor for supervised tasks on the natural image dataset of Extended Yale-B Faces. The task here is to estimate the camera pose $Y$ from image $X$ while being conditionally independent of the illumination $Z$ which is represented as the azimuth angle of the light source with respect to the subject. Since these are natural images, we use the ResNet-18 \citep{He2016DeepRL} model pre-trained on ImageNet \citep{deng2009imagenet} to extract image features, followed by three fully-connected layers containing 128, 64 and 1 unit(s) respectively. Here we sample the training data according to the non-linear relation $\changedmore{Z'} = Z = 0.5(Y + \varepsilon Y^2)$, where \changedmore{$\varepsilon$ is either $+1$ or $-1$ with equal probability}. In this case $\condE{Z}{Y} = 0.5Y + 0.5Y^2\condE{\varepsilon}{Y} = 0.5Y,$ and thus the linear residuals depend on $Y$. \changedmore{(In experiments, $Y$ and $\varepsilon$ are re-scaled to be in the same range. We avoid it here for simplicity.)} Note that GCM can in principle find the correct solution using a linear decoder.
Results are shown in \cref{fig:yaleb-256}. CIRCE shows a small advantage over HSCIC in OOD performance for the best regularizer choice. GCM  suffers from numerical instability in this example, which leads to poor performance. 

\begin{figure}[h]
    \centering
    \includegraphics[width=0.9\textwidth]{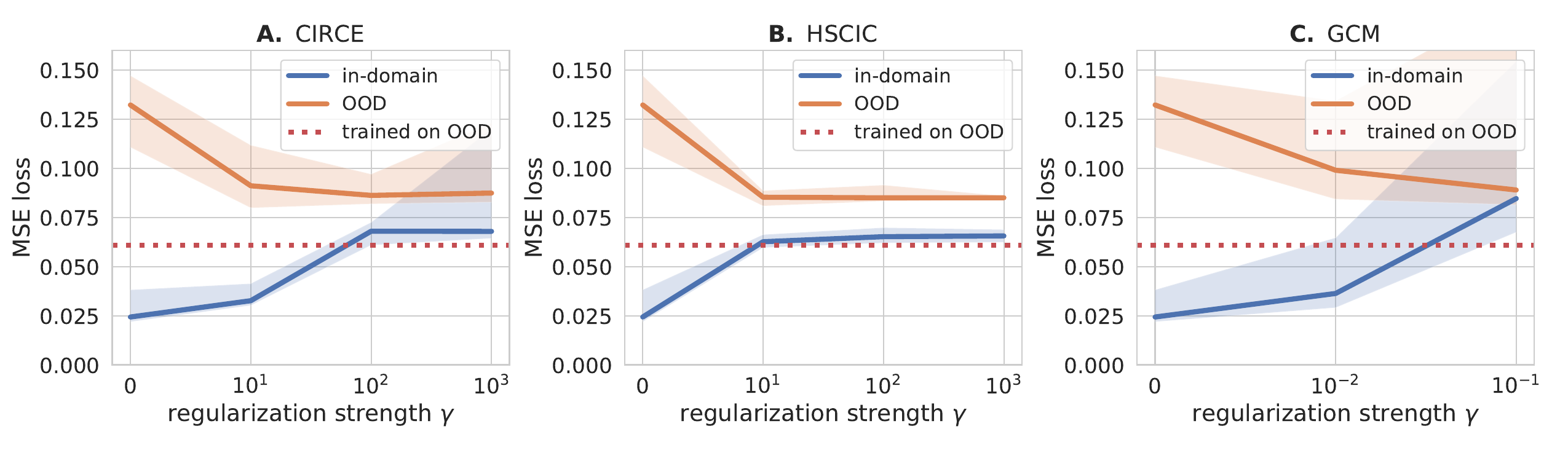}
    \caption{Yale-B. Blue: in-domain test loss; orange: out-of-domain loss (OOD); red: loss for OOD-trained encoder. Solid lines: median over 10 seeds; shaded areas: min/max values.}
    \label{fig:yaleb-256}
\end{figure}

\section{Discussion}
We have introduced CIRCE: a kernel-based measure of conditional independence, which can be used as a regularizer to enforce  conditional independence between a network's predictions and a pre-specified variable with respect to which invariance is desired. 
The technique can be used in many applications, including fairness, domain invariant learning, and causal representation learning.
Following an initial regression step (which can be done offline), CIRCE enforces conditional independence via a marginal independence requirement during representation learning, which makes it well suited to minibatch training. By contrast, alternative conditional independence regularizers require an additional regression step on each minibatch, resulting in a higher variance criterion which can be less effective in complex learning tasks.

As future work, it will be of interest to determine whether or not CIRCE is statistically significant on a given dataset, so as to employ it as a statistic for a test of conditional dependence.

\subsubsection*{Acknowledgments}
This work was supported by DeepMind, the Gatsby Charitable Foundation, the Wellcome Trust,
the Canada CIFAR AI Chairs program,
the Natural Sciences and Engineering Resource Council of Canada, 
SHARCNET, %
Calcul Qu\'ebec, %
the Digital Resource Alliance of Canada, 
and Open Philanthropy.
Finally, we thank Alexandre Drouin and Denis Therien  for the Bellairs Causality workshop which sparked the project.

\bibliography{iclr2023_conference,victor-bib}
\bibliographystyle{iclr2023_conference}

\appendix
\section*{Appendices}
\section{Conditional independence definitions}

We first repeat the proof of the main theorem in \citealt{daudin1980partial}, as the missing proofs we need for the alternative definitions of independence rely on the main one.
\begin{theorem}[Theorem 1 of \citealt{daudin1980partial}] Define $E_{1}=\{g:g\in L_{XY}^{2},\condE{g}{Y}=0\}$, $E_{2}=\{h:h\in L_{YZ}^{2},\condE{h}{Y}=0\}$. Then, the following two conditions are equivalent:
\begin{align*}
    \expect\left[g_{1}h_{1}\right]\,&=0\ &\  &\forall g_{1}\in E_{1},\forall h_{1}\in E_{2}\,,\\ \condE{gh}{Y}\,&=\condE{g}{Y}\condE{h}{Y}\ &\ &\forall g\in L_{XY}^{2},\forall h\in L_{YZ}^{2}\,.
\end{align*}
\label{app:th:daudin_main}
\end{theorem}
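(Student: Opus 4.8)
The plan is to prove the two implications separately, in each case exploiting the orthogonal decomposition of an $L^2$ function into its conditional mean and a conditionally centered remainder. For any $g \in L^2_{XY}$, write $g = \condE{g}{Y} + \tilde g$ with $\tilde g := g - \condE{g}{Y}$; then $\tilde g \in E_1$, since conditional expectation is an $L^2$ contraction (so $\tilde g \in L^2_{XY}$) and $\condE{\tilde g}{Y} = 0$. Symmetrically, any $h \in L^2_{YZ}$ decomposes as $h = \condE{h}{Y} + \tilde h$ with $\tilde h \in E_2$.

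The direction ``second condition $\Rightarrow$ first'' is immediate. Given $g_1 \in E_1$ and $h_1 \in E_2$, applying the factorization to $g = g_1$, $h = h_1$ gives $\condE{g_1 h_1}{Y} = \condE{g_1}{Y}\condE{h_1}{Y} = 0$, because both conditional means vanish by definition of $E_1, E_2$; taking expectations and using the tower property yields $\expect[g_1 h_1] = 0$.

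For the converse I would expand the product with the decomposition to obtain $\condE{gh}{Y} = \condE{g}{Y}\condE{h}{Y} + \condE{\tilde g\,\tilde h}{Y}$, where the two cross terms vanish: each contains a factor that is $\sigma(Y)$-measurable and can be pulled out of the conditional expectation, leaving $\condE{\tilde g}{Y} = 0$ or $\condE{\tilde h}{Y} = 0$. The goal thus reduces to showing $\condE{\tilde g\,\tilde h}{Y} = 0$ almost surely. The first condition only supplies $\expect[\tilde g\,\tilde h] = 0$, a single scalar identity, so the crux is to upgrade this to an almost-sure statement about the conditional expectation; this is the main obstacle.

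The key trick resolving it is a multiplier argument. For any bounded measurable $\omega : \mathcal Y \to \mathbb R$, the function $\omega(Y)\tilde g$ still lies in $E_1$: it is in $L^2_{XY}$ by boundedness of $\omega$, and $\condE{\omega(Y)\tilde g}{Y} = \omega(Y)\condE{\tilde g}{Y} = 0$. Applying the first condition to $g_1 = \omega(Y)\tilde g$ and $h_1 = \tilde h$ gives $\expect[\omega(Y)\,\tilde g\,\tilde h] = 0$, hence $\expect[\omega(Y)\,\condE{\tilde g\,\tilde h}{Y}] = 0$ for every bounded measurable $\omega$. Since $W := \condE{\tilde g\,\tilde h}{Y}$ is itself a $\sigma(Y)$-measurable function of $Y$, choosing $\omega = \sign(W)$ forces $\expect[\lvert W \rvert] = 0$, so $W = 0$ almost surely. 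Combined with the expansion above, this recovers the factorization $\condE{gh}{Y} = \condE{g}{Y}\condE{h}{Y}$ for arbitrary $g \in L^2_{XY}$ and $h \in L^2_{YZ}$, completing the equivalence.
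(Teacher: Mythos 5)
Your proof is correct and takes essentially the same route as the paper's: both decompose into conditionally centered residuals $\tilde g \in E_1$, $\tilde h \in E_2$, and both upgrade the scalar identity $\expect[\tilde g\,\tilde h]=0$ to an almost-sure statement by multiplying the $E_1$ element by a function of $Y$, which remains in $E_1$. The only cosmetic difference is the choice of multiplier: the paper uses indicators $I_B(Y)$ of Borel sets in the image space of $Y$ and concludes because a $\sigma(Y)$-measurable function whose integral over every such set vanishes is zero almost surely, whereas you use a general bounded $\omega(Y)$ and close with $\omega=\sign(W)$ --- the same localization idea in a slightly different guise.
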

\begin{proof}
\textit{Necessary condition:} $\condE{gh}{Y}=\condE{g}{Y}\condE{h}{Y}\implies \expect[g_{1}h_{1}]=0$

Because $E_{1}\subseteq L_{XY}^{2}$ and $E_{2}\subseteq L_{YZ}^{2}$, for $g_1 \in E_1$ and $h_1 \in E_2$ we have

\begin{align*}
& \condE{g_{1}h_{1}}{Y} =\condE{g_{1}}{Y}\condE{h_{1}}{Y}=0\\
\implies & \expect[g_{1}h_{1}]=\expect_{Y}[\condE{g_{1}h_{1}}{Y}]=0\,.
\end{align*}

\textit{Sufficient condition:} $\expect[g_{1}h_{1}]=0\implies \condE{gh}{Y}=\condE{g}{Y}\condE{h}{Y}$

Let $g'=g-\condE{g}{Y}$ where \textbf{$g\in L_{XY}^{2}$} and $h'=h-\condE{h}{Y}$ where \textbf{$h\in L_{XY}^{2}$}. Then, $g'\in E_{1}$ and $h'\in E_{2}$ 
\begin{align}
\expect[g'h'] & =\expect\left[(g-\condE{g}{Y})(h-\condE{h}{Y})\right]\nonumber\\
 & =\expect\left[gh-h\condE{g}{Y}-g\condE{h}{Y}+\condE{g}{Y}\condE{h}{Y}\right]\nonumber\\
 & =\expect_{Y}\left[\condE{\left(gh-h\condE{g}{Y}-g\condE{h}{Y}+\condE{g}{Y}\condE{h}{Y}\right)}{Y}\right]\nonumber\\
 & =\expect_{Y}\left[\condE{gh}{Y}-\condE{g}{Y}\condE{h}{Y}\right]=0\,.
 \label{app:eq:daudin_main_proof}
\end{align}

Let $B$ be a Borel set of the image space of $Y$, $g*=gI_{B}$ where $I_B$ is an indicator function of $B$. We have $\int g^{*2}dP=\int g^{2}I_{B}dP=\int_{B}g^{2}dP\leq\int g^{2}dP<\infty$,
therefore $g^{*}\in L_{XY}^{2}$. Using \cref{app:eq:daudin_main_proof},
\begin{align*}
    & \expect_{Y}\left[\condE{g^*h}{Y}-\condE{g^*}{Y}\condE{h}{Y}\right]\\
    = & \expect_{Y}\left[\condE{ghI_{B}}{Y}-\condE{gI_{B}}{Y}\condE{h}{Y}\right]\\
    =& \int_{B}\condE{gh}{Y}dP-\int_{B}\condE{g}{Y}\condE{h}{Y}dP=0    
\end{align*}
So $\condE{gh}{Y}=\condE{g}{Y}\condE{h}{Y}$ almost surely.
\end{proof}

\begin{corollary}[Equation 3.8 of \citealt{daudin1980partial}] The following two conditions are equivalent:
\begin{align*}
    \expect\left[gh_{1}\right]\,&=0\ &\  &\forall g\in L^2_{XY},\forall h_{1}\in E_{2}\,,\\ \condE{gh}{Y}\,&=\condE{g}{Y}\condE{h}{Y}\ &\ &\forall g\in L_{XY}^{2},\forall h\in L_{YZ}^{2}\,.
\end{align*}
\label{app:cor:daudin_semifinal}
\end{corollary}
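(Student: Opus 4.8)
The plan is to reduce the corollary to \cref{app:th:daudin_main}, which already equates the \emph{centered} condition $\expect[g_1 h_1]=0$ for all $g_1\in E_1$, $h_1\in E_2$ with the product rule $\condE{gh}{Y}=\condE{g}{Y}\condE{h}{Y}$ for all $g\in L_{XY}^2$, $h\in L_{YZ}^2$. Since the corollary's second condition is exactly this product rule, it suffices to show that the corollary's first condition --- $\expect[gh_1]=0$ for \emph{all} $g\in L_{XY}^2$ and $h_1\in E_2$ --- is equivalent to the centered condition. In other words, the only content is that enlarging the test class for $g$ from $E_1$ to all of $L_{XY}^2$ does not strengthen the requirement; everything else follows by chaining through \cref{app:th:daudin_main}.

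One direction is immediate: since $E_1\subseteq L_{XY}^2$, if $\expect[gh_1]=0$ holds for every $g\in L_{XY}^2$ then it holds in particular for every $g_1\in E_1$, which is the centered condition. For the converse I would use the decomposition $g=\condE{g}{Y}+\bigl(g-\condE{g}{Y}\bigr)$, valid for any $g\in L_{XY}^2$, where $g':=g-\condE{g}{Y}\in E_1$ and $\condE{g}{Y}$ is a function of $Y$ alone. For $h_1\in E_2$, splitting the expectation gives $\expect[gh_1]=\expect\bigl[\condE{g}{Y}\,h_1\bigr]+\expect[g'h_1]$. The second term vanishes by the centered condition since $g'\in E_1$ and $h_1\in E_2$. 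For the first term, because $\condE{g}{Y}$ is $Y$-measurable it pulls out of the inner conditional expectation, so $\condE{\condE{g}{Y}\,h_1}{Y}=\condE{g}{Y}\condE{h_1}{Y}=0$, using $\condE{h_1}{Y}=0$; taking the outer expectation over $Y$ via the tower property then shows the first term is zero as well. Hence $\expect[gh_1]=0$ for all $g\in L_{XY}^2$, establishing the equivalence with the centered condition.

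There is no substantial obstacle here: the corollary is a routine strengthening of the test-function class, and the key observation is simply that the $Y$-measurable component $\condE{g}{Y}$ always pairs to zero against any $h_1\in E_2$. The one point I would verify carefully is the integrability/measurability needed to pull $\condE{g}{Y}$ out of the conditional expectation and to apply the tower property $\expect[\cdot]=\expect_Y[\condE{\cdot}{Y}]$; both are standard given $g\in L_{XY}^2$ and $h_1\in L_{YZ}^2$, since then the relevant products are integrable by Cauchy--Schwarz and $\condE{g}{Y}$ remains in $L^2$ as a conditional expectation is an $L^2$-contraction. Combining the two directions with \cref{app:th:daudin_main} yields the stated equivalence.
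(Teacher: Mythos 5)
Your proof is correct, but it takes a genuinely different route from the paper's. The paper proves the sufficiency direction directly: for arbitrary $h \in L^2_{YZ}$ it centers $h$, setting $h' = h - \condE{h}{Y} \in E_2$, applies the hypothesis $\expect[gh']=0$ to obtain the integrated identity $\expect_Y\left[\condE{gh}{Y} - \condE{g}{Y}\condE{h}{Y}\right]=0$, and then upgrades this to an almost-sure statement by re-running the localization step from the proof of \cref{app:th:daudin_main} (testing against $g I_B$ for Borel sets $B$ in the image space of $Y$). You instead center $g$: writing $g = \condE{g}{Y} + g'$ with $g' \in E_1$, you observe that the $Y$-measurable component is automatically orthogonal to every $h_1 \in E_2$, so your condition over all $g \in L^2_{XY}$ is equivalent to the centered condition over $E_1$, and the corollary follows by invoking \cref{app:th:daudin_main} as a black box. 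Your reduction is more modular: it isolates the only new content of the corollary -- that enlarging the test class from $E_1$ to $L^2_{XY}$ costs nothing, essentially because $L^2_{XY} = E_1 \oplus L^2_Y$ with $L^2_Y \perp E_2$ -- and it avoids repeating the Borel-set localization argument, leaning on the theorem's statement rather than its proof technique. The paper's direct argument, by contrast, re-derives the product rule from scratch in the same style as the theorem, which keeps the chain of results in the form Daudin states them and reuses the pattern that appears again in \cref{app:cor:cond_indep_final}. The integrability points you flag (Cauchy--Schwarz for the products, $L^2$-contractivity of conditional expectation so that $g' \in E_1$ and the tower property applies) are exactly the right ones, so there is no gap.
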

\begin{proof}
\textit{Necessary condition} is identical to the previous proof.

\textit{Sufficient condition:} $\expect[gh_{1}]=0\implies \condE{gh}{Y}=\condE{g}{Y}\condE{h}{Y}$

Let $h'=h-\condE{h}{Y}$ where \textbf{$h\in L_{YZ}^{2}$},
then $h'\in E_{2}$
\begin{align*}
\expect[gh'] & =\expect[g(h-\condE{h}{Y})]\\
 & =\expect[gh-g\condE{h}{Y}]\\
 & =\expect_{Y}\left[\condE{\left(gh-g\condE{h}{Y}\right)}{Y}\right]\\
 & =\expect_{Y}\left[\condE{gh}{Y}-\condE{g\condE{h}{Y}}{Y}\right]\\
 & =\expect_{Y}\left[\condE{gh}{Y}-\condE{g}{Y}\condE{h}{Y}\right]=0\,.
\end{align*}

Using the same argument as for  \cref{app:th:daudin_main},
$\condE{gh}{Y}=\condE{g}{Y}\condE{h}{Y}$ almost surely.
\end{proof}

\begin{corollary}[Equation 3.9 of \citealt{daudin1980partial}]
The following two conditions are equivalent:
\begin{align*}
    \expect\left[g'h_{1}\right]\,&=0\ &\  &\forall g'\in L^2_{X},\forall h_{1}\in E_{2}\,,\\ \condE{gh}{Y}\,&=\condE{g}{Y}\condE{h}{Y}\ &\ &\forall g\in L_{XY}^{2},\forall h\in L_{YZ}^{2}\,.
\end{align*}
\label{app:cor:cond_indep_final}
\end{corollary}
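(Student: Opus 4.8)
The plan is to reduce the statement to the already-established \cref{app:cor:daudin_semifinal} (Equation 3.8), whose first condition quantifies over all $g\in L^2_{XY}$ rather than only $g'\in L^2_X$. Since the conditional-independence statement (the second line) is literally the same in both corollaries, it suffices to show that the first condition here is equivalent to the first condition of \cref{app:cor:daudin_semifinal}; the desired equivalence then follows by transitivity.

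The necessary direction I expect to be immediate. Because $L^2_X\subseteq L^2_{XY}$, the first condition of \cref{app:cor:daudin_semifinal} already yields $\expect[g'h_1]=0$ for every $g'\in L^2_X$ and $h_1\in E_2$ simply by restriction; equivalently, one repeats the one-line argument used in the earlier proofs, namely $\condE{g'h_1}{Y}=\condE{g'}{Y}\condE{h_1}{Y}=0$ (using $\condE{h_1}{Y}=0$), whence $\expect[g'h_1]=\expect[\condE{g'h_1}{Y}]=0$.

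The content lies in the sufficient direction. Assuming $\expect[g'h_1]=0$ for all $g'\in L^2_X$ and $h_1\in E_2$, I would aim to recover $\expect[g h_1]=0$ for all $g\in L^2_{XY}$ and $h_1\in E_2$, and then invoke \cref{app:cor:daudin_semifinal}. First I would verify this for product functions $g(X,Y)=a(X)\,b(Y)$ with $a\in L^2_X$ and $b$ a bounded measurable function of $Y$. The key observation is that absorbing the $Y$-measurable factor $b(Y)$ into $h_1$ keeps it admissible: since $\condE{b(Y)\,h_1(Z,Y)}{Y}=b(Y)\,\condE{h_1(Z,Y)}{Y}=0$ and $b$ is bounded (so $b(Y)\,h_1\in L^2_{YZ}$), we get $b(Y)\,h_1\in E_2$. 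Hence $\expect[a(X)\,b(Y)\,h_1]=\expect[a(X)\,(b(Y)\,h_1)]=0$ by the hypothesis applied with the admissible pair $g'=a\in L^2_X$ and $b(Y)\,h_1\in E_2$.

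Finally I would extend from products to all of $L^2_{XY}$ by density and continuity. Finite linear combinations of the product functions above are dense in $L^2_{XY}$ (simple functions supported on measurable rectangles span a dense subspace), and for fixed $h_1$ the linear functional $g\mapsto\expect[g h_1]$ is bounded on $L^2_{XY}$ by Cauchy--Schwarz, so its vanishing on a dense set forces it to vanish everywhere. I expect this last approximation step to be the main (though routine) obstacle: one must confirm that the dense family can be chosen with $b$ bounded so that $b(Y)\,h_1$ genuinely stays in $E_2$, and that the Cauchy--Schwarz bound gives continuity uniform enough to pass to the limit. Once $\expect[g h_1]=0$ is established for all $g\in L^2_{XY}$, the first condition of \cref{app:cor:daudin_semifinal} holds and the stated conditional independence follows.
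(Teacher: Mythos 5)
Your proposal is correct and follows essentially the same route as the paper: reduce to \cref{app:cor:daudin_semifinal}, note the necessary direction is immediate, and for the sufficient direction establish $\expect[gh_1]=0$ on simple/product functions $a(X)b(Y)$ by absorbing the $Y$-measurable factor into $h_1$ (the paper does this with indicators $I_{A_i^Y}$ of rectangles, you with bounded $b(Y)$), then extend to all of $L^2_{XY}$ by density and Cauchy--Schwarz continuity. If anything, your version is slightly more explicit than the paper's, since stating $b(Y)h_1\in E_2$ and invoking the hypothesis directly avoids the paper's unjustified-looking intermediate claim $\condE{g_ih_1}{Y}=0$, which otherwise needs a localization argument.
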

\begin{proof}
\textit{Necessary condition:} As $E_{2}\subseteq L_{YZ}^{2}$ and $L_{X}^{2}\subseteq L_{XY}^{2}$,
\begin{equation*}
    \condE{g'h_{1}}{Y}=\condE{g'}{Y}\condE{h_{1}}{Y}=0\,.
\end{equation*}

\textit{Sufficient condition:} $\expect[g'h_{1}]=0\implies \condE{gh}{Y}=\condE{g}{Y}\condE{h}{Y}$

Take a simple function $g_a=\sum_{i=1}^n a_i I_{A_i}$, where $A_i=A_i^X \otimes A_i^Y$ for integrable Borel sets $A_i^X$ (on $X$) and $A_i^Y$ (on $Y$). On a product measure space over $XY$, these simple functions are dense in $L^2_{XY}$, so we only need to prove the condition for all $g_a$.

The indicator function decomposes as $I_{A_i} = I_{A_i^X}I_{A_i^Y}$, and therefore for $g_i=a_iI_{A_i^X}$
\begin{equation*}
    g_a = \sum_i^n g_iI_{A_i^Y}\,.
\end{equation*}
Therefore,
\begin{align*}
    \expect[g_ah_1] = \expect\left[\sum_{i=1}^nI_{A_i^Y}\condE{g_ih_1}{Y} \right] = \expect\left[\sum_{i=1}^nI_{A_i^Y}\cdot 0 \right]=0\,.
\end{align*}
As simple functions are dense in $L^2_{XY}$, we immediately have $\expect[gh_1]=0$ $\forall g\in L^2_{XY}, h_1\in E_2$. Applying \cref{app:cor:daudin_semifinal} concludes the proof.

\end{proof}

\section{CIRCE definition}
\label{app:sec:kernel_proofs}
First, we need a more convenient function class:
\begin{lemma} \label{prop:l2-class}
The function class $E_2=\left\{h\in L^2_{ZY},\,\condE{h}{Y}=0 \right\}$ coincides with the function class $E_2'=\left\{h' = h - \condE{h}{Y},\, h \in L^2_{ZY} \right\}$.
\end{lemma}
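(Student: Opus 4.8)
The plan is to show the two function classes $E_2$ and $E_2'$ are equal by proving mutual inclusion. The key observation is that the conditional centering operation $h \mapsto h - \condE{h}{Y}$ acts as a projection: applying it to a function already in $E_2$ leaves it unchanged, since such a function has zero conditional expectation. Conversely, applying it to any $h \in L^2_{ZY}$ produces a function whose conditional expectation vanishes.

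First I would prove $E_2 \subseteq E_2'$. Take any $h \in E_2$, so $\condE{h}{Y} = 0$ almost surely. Then I can write $h = h - 0 = h - \condE{h}{Y}$, which exhibits $h$ in the form defining $E_2'$ (with the same $h$ playing the role of the generator). Hence $h \in E_2'$. This direction is essentially immediate.

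Next I would prove $E_2' \subseteq E_2$. Take any element $h' = h - \condE{h}{Y}$ with $h \in L^2_{ZY}$. Two things must be checked: that $h' \in L^2_{ZY}$, and that $\condE{h'}{Y} = 0$. For square-integrability, I would note that $\condE{h}{Y}$ is itself in $L^2_{ZY}$ (the conditional expectation is an $L^2$-contraction, or equivalently a non-expansive orthogonal projection onto the $\sigma(Y)$-measurable functions), so the difference of two $L^2$ functions remains in $L^2$. For the conditional expectation, I would compute
\begin{equation*}
    \condE{h'}{Y} = \condE{h - \condE{h}{Y}}{Y} = \condE{h}{Y} - \condE{h}{Y} = 0,
\end{equation*}
using linearity of conditional expectation and the tower/idempotence property $\condE{\condE{h}{Y}}{Y} = \condE{h}{Y}$ (the inner conditional expectation is already $Y$-measurable). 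Thus $h' \in E_2$.

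The argument is elementary and relies only on standard properties of conditional expectation, so I do not anticipate a genuine obstacle. The only point requiring a little care is the idempotence step in the second inclusion: one must justify that $\condE{\condE{h}{Y}}{Y} = \condE{h}{Y}$, which holds because $\condE{h}{Y}$ is measurable with respect to $\sigma(Y)$ and conditional expectation fixes functions already measurable in the conditioning $\sigma$-algebra. With these two inclusions established, the classes coincide and the lemma follows.
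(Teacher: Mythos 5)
Your proof is correct and takes essentially the same route as the paper: mutual inclusion, with the first inclusion being immediate and the second requiring a check of both the vanishing conditional expectation and square-integrability. The only cosmetic difference is in the integrability step, where the paper expands $\int (h - \condE{h}{Y})^2\,d\mu$ directly and drops a non-positive term, while you invoke the standard fact that conditional expectation is an $L^2$-contraction (orthogonal projection); both justifications are valid and equally elementary.
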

\begin{proof}
$E_2\subseteq E_2'$: any $h\in E_2$ is in $L^2_{ZY}$ and has the form $h=h-\condE{h}{Y}$ by construction because the last term is zero.

$E_2'\subseteq E_2$: first, any $h'\in E_2'$ satisfies $\condE{h'}{Y}=0$ by construction. Second,
\begin{align}
    \int (h')^2\,d\mu(Z,Y) \,&=  \int (h - \condE{h}{Y})^2\,d\mu(Z,Y)\\
    &=\int \brackets{h^2 - 2\,h\,\condE{h}{Y} + \brackets{\condE{h}{Y}}^2}\,d\mu(Z,Y)\\
    &=\int \brackets{h^2 -  \brackets{\condE{h}{Y}}^2}\,d\mu(Z,Y) < +\infty\,,
\end{align}
as $h\in L^2_{ZY}$ and the second term is non-positive.
\end{proof}

\begin{proof}[Proof of \cref{th:cond_cov_means_indep}]\label{proof:circe}
For the ``if'' direction,
we simply ``pull out'' the $Y$ expectation in the definition of the CIRCE operator and apply conditional independence:
\[ C_{XZ\mid Y}^c
= \expect_Y\Bigl[
    \expect_X[\phi(X) \mid Y] \otimes \underbrace{\bigl(
        \expect_Z[ \psi(Z, Y) \mid Y ]
        - \expect_{Z'}[ \psi(Z', Y) \mid Y ]
    \bigr)}_0
\Bigr]
= 0
.\]

For the other direction,
first, $\|C_{XZ\mid Y}^c\|_{\mathrm{HS}}=0$ implies that for any $g\in\mathcal{G}$ and $h\in\mathcal{F}$,
\begin{equation}
    \expect\left[g\brackets{h - \condE{h}{Y}}\right] = 0
\end{equation}
by Cauchy-Schwarz.

Now, we use that an $L_2$-universal kernel is dense in $L^2$ by definition (see \cite{SriFukLan11}). Therefore, for any $g\in L^2_{X}$ and $h\in L^2_{ZY}$, for any $\eps > 0$ we can find $g_\eps\in\mathcal{G}$ and $h_\eps\in\mathcal{F}$ such that
\begin{equation}
    \|g - g_\eps\|_2 \leq \eps,\ \|h - h_\eps\|_2\leq \eps\,.
\end{equation}
For the $L^2$ function, we can now write the conditional independence condition as
\begin{align}
    \expect\left[g\brackets{h - \condE{h}{Y}}\right] \,&=\expect\left[(g\pm g_\eps)\brackets{h \pm h_\eps- \condE{h\pm h_\eps}{Y}}\right]\\
    &=0 + \expect\left[(g- g_\eps)\brackets{h - h_\eps- \condE{h- h_\eps}{Y}}\right]\\
    &\quad + \expect\left[g_\eps\brackets{h - h_\eps- \condE{h- h_\eps}{Y}}\right]- \expect\left[(g- g_\eps)\brackets{h_\eps - \condE{h_\eps}{Y}}\right]\,.
\end{align}
The first term is zero because $\|C_{XZ\mid Y}^c\|_{\mathrm{HS}}=0$. For the rest, we need to apply Cauchy-Schwarz:
\begin{align}
    \expect\left[(g- g_\eps)\brackets{h - h_\eps }\right] \,&\leq \|g- g_\eps\|_2\,\|h- h_\eps\|_2\leq \eps^2\,\\
    \expect\left[(g- g_\eps)\brackets{\condE{h- h_\eps}{Y}}\right] \,&\leq \|g- g_\eps\|_2\,\|h- h_\eps\|_2\leq \eps^2\,,
\end{align}
where in the last inequality we used that $\expect\left[\brackets{\condE{X}{H}}^2\right]\leq\expect\left[X^2\right]$ for conditional expectations.

Similarly, also using the reverse triangle inequality,
\begin{align}
    \expect\left[g_\eps\brackets{h - h_\eps}\right] \leq \eps\,\|g_\eps\|_2 \leq \eps\brackets{\|g\|_2 + \eps}\,.
\end{align}

Repeating this calculation for the rest of the terms, we can finally apply the triangle inequality to show that
\begin{align}
    \left|\expect\left[g\brackets{h - \condE{h}{Y}}\right]\right| \,&\leq 2\,\eps^2+2\,\eps\brackets{\|g\|_2 + \eps}+2\,\eps\brackets{\|h\|_2 + \eps}\\
    &=2\,\eps\brackets{3\,\eps + \|g\|_2 + \|h\|_2}\,.
\end{align}
As $\|g\|_2$ and $\|h\|_2$ are fixed and finite, we can make the bound arbitrary small, and hence $\expect\left[g\brackets{h - \condE{h}{Y}}\right]=0$.
\end{proof}

\section{Proofs for estimators}
\label{app:sec:estimators}

\subsection{Estimating the conditional mean embedding}

We will construct an estimate of the term
$\condE[Z]{\psi(Z, Y)}{Y}$
that appears inside CIRCE, as a function of $Y$.
We summarize the established results on conditional feature mean estimation: see \citep{GruLevBalPatetal12,park2020measure,MolKol20,KleSchSul20,li2022optimal} for further details.
To learn $\condE{\psi(Q)}{Y}$ for some feature map $\psi(q)\in\mathcal{H}_Q$ and random variable $Q$ (both to be specified shortly),
we can minimize the following loss:
\begin{equation}
    \hat\mu_{Q|Y,\lambda}(y) = \argmin_{F\in \mathcal{G}_{QY}} \sum_{i=1}^N\left\|\psi(q_i) - F(y_i) \right\|^2_{\mathcal{H}_Q} + \lambda \|F\|^2_{\mathcal{G}_{QY}}\,,
    \label{eq:kme_loss}
\end{equation}
where $\mathcal{G}_{QY}$ is the space of functions from $Y$ to $\mathcal{H}_Q$.
The above solution is said to be {\em well-specified} when there exists a Hilbert-Schmidt operator $A^*\in HS(\mathcal{H}_Y, \mathcal{H}_Q)$ such that  $F^* (y) =A^*\psi(y)$ for all $y\in \mathcal{Y}$, where $\mathcal{H}_Y$ is the RKHS on $\mathcal{Y}$ with feature map $\psi(y)$ \citep{li2022optimal}.

We now consider the case relevant to our setting, where $Q:=(Z,Y).$ 
We define\footnote{We abuse notation in using $\psi$ to denote feature maps of $(Y,Z),$ $Y,$ and $Z$; in other words, we use the argument of the feature map to specify the feature space, to simplify notation.}
$\psi(Z,Y) = \psi(Z)\otimes\psi(Y),$ which for radial basis kernels (e.g. Gaussian, Laplace) is $L_2$-universal for $(Z,Y)$.\footnote{\citet[Section 2.2]{FukGreSunSch08} show this kernel is \emph{characteristic}, and \citet[Figure 1 (3)]{SriFukLan11} that being characteristic implies $L_2$ universality in this case.}  We then write $\condE[Z]{\psi(Z,y)}{Y=y}=\condE[Z]{\psi(Z)}{Y=y}\otimes\psi(y)$.
The conditional feature mean $\condE{\psi(Z)}{Y}$  can be found with kernel ridge regression \citep{GruLevBalPatetal12,li2022optimal}:
\begin{align}
    \mu_{Z|\,Y}(y)\equiv\condE{\psi(Z)}{Y}(y) \approx K_{yY}\brackets{K_{YY} + \lambda I}^{-1}K_{Z\,\dotp}
\end{align}
where $K_{Z\,\dotp}$ indicates a ``matrix'' with \changedmore{rows} $\psi(z_i)$, $(K_{YY})_{i,j}=k(y_i,y_j),$ and $(K_{yY})_i = k(y,y_i).$
Note that we have used the argument of $k$ to identify which feature space it pertains to -- i.e., the kernel on $Z$ need not be the same as that on $Y$.

We can find good choices for the $Y$ kernel and the ridge parameter $\lambda$ by minimizing the leave-one-out cross-validation error.
In kernel ridge regression, this is almost computationally free,
based on the following version of a classic result for scalar-valued ridge regression.
The proof generalizes the proof of Theorem 3.2 of \citet{bachmann2022generalization} to RKHS-valued outputs.
\begin{theorem}[Leave-one-out for kernel mean embeddings] Denote the predictor trained on the full dataset as $F_{\mathcal{S}}$, and the one trained without the $i$-th point as $F_{-i}$. For $\lambda > 0$ and $A\equiv K_{YY}\brackets{K_{YY} + \lambda\,I}^{-1}$, the
leave-one-out (LOO) error for \cref{eq:kme_loss} is 
\begin{equation}
    \frac1N \sum_{i=1}^N \left\|\psi(z_i) - F_{-i}(y_i) \right\|^2_{\mathcal{H}_Z} = 
    \frac1N \sum_{i=1}^N \frac{\left\|\psi(z_i) - F_{\mathcal{S}}(y_i) \right\|^2_{\mathcal{H}_Z}}{(1 - A_{ii})^2}\,.
\end{equation}
\label{th:loo_kme}
\end{theorem}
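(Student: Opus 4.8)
The plan is to establish a leave-one-out identity for RKHS-valued kernel ridge regression by mirroring the classical scalar argument, with the key observation being that the ``labels'' $\psi(z_i)$ live in a Hilbert space $\mathcal{H}_Z$ but the regression structure in the $Y$-variable is unchanged. First I would write down the closed-form solution of \cref{eq:kme_loss}: the full-data predictor evaluated at the training points is $F_{\mathcal{S}}(y_j) = \sum_i A_{ji}\,\psi(z_i)$ where $A = K_{YY}(K_{YY}+\lambda I)^{-1}$ is exactly the scalar ``hat'' (smoother) matrix, since the ridge regression acts linearly on the vector of outputs and the smoother depends only on the input kernel $K_{YY}$ and $\lambda$, not on the output space. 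This linearity is what lets the whole scalar derivation go through coordinate-by-coordinate (or, more cleanly, directly at the level of $\mathcal{H}_Z$-valued vectors).

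The main structural step is the standard leave-one-out trick: I would show that the predictor $F_{-i}$ trained on the dataset with the $i$-th point removed coincides with the predictor trained on a \emph{modified} full dataset in which the $i$-th label $\psi(z_i)$ is replaced by the leave-one-out prediction $F_{-i}(y_i)$ itself. The point is that once the $i$-th label equals what the $(N-1)$-point regressor would predict there, adding that point back changes nothing, so the two regressors agree. This is purely an optimality/normal-equations argument and carries over verbatim to Hilbert-space-valued targets because the objective in \cref{eq:kme_loss} is a sum of squared $\mathcal{H}_Z$-norms plus a quadratic regularizer, hence strictly convex with a unique minimizer characterized by linear normal equations.

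From there I would combine the two facts. Applying the smoother $A$ to the modified label vector and reading off the $i$-th coordinate gives
\begin{equation}
    F_{-i}(y_i) = \sum_{j \neq i} A_{ij}\,\psi(z_j) + A_{ii}\,F_{-i}(y_i),
\end{equation}
which rearranges to $(1 - A_{ii})\,F_{-i}(y_i) = F_{\mathcal{S}}(y_i) - A_{ii}\,\psi(z_i)$. Subtracting both sides from $\psi(z_i)$ and dividing by $(1-A_{ii})$ yields $\psi(z_i) - F_{-i}(y_i) = \bigl(\psi(z_i) - F_{\mathcal{S}}(y_i)\bigr)/(1 - A_{ii})$, an identity between elements of $\mathcal{H}_Z$. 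Taking squared $\mathcal{H}_Z$-norms and averaging over $i$ gives the claimed formula. I expect the main obstacle to be purely expository rather than mathematical: one must be careful to justify that the smoother matrix $A$ is genuinely independent of the output space (so that $A_{ii} \neq 1$ for $\lambda>0$, keeping the denominator well-defined) and that all manipulations valid for scalar outputs remain valid when the outputs are vectors in $\mathcal{H}_Z$, which follows because $A$ acts on the output-index while leaving $\mathcal{H}_Z$ untouched. No new estimate or inequality is needed beyond linearity and the normal equations, so the generalization over \citet{bachmann2022generalization} is essentially a matter of replacing scalars by Hilbert-space vectors throughout.
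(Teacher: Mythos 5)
Your proposal is correct and takes essentially the same route as the paper's own proof: the closed-form smoother $A$ acting on $\mathcal{H}_Z$-valued labels, the key lemma that $F_{-i}$ coincides with the full-data regressor on the dataset whose $i$-th label is replaced by $F_{-i}(y_i)$ (which the paper justifies via a direct chain of loss inequalities rather than your convexity/normal-equations argument), the resulting fixed-point equation $(1-A_{ii})F_{-i}(y_i) = F_{\mathcal{S}}(y_i) - A_{ii}\psi(z_i)$, and the residual identity $\psi(z_i) - F_{-i}(y_i) = \bigl(\psi(z_i) - F_{\mathcal{S}}(y_i)\bigr)/(1-A_{ii})$ followed by taking norms. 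The only differences are cosmetic matters of justification, so nothing further is needed.
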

\begin{proof}
Denote the full dataset $\mathcal{S}=\{(y_i, z_i)\}_{i=1}^M$; the dataset missing the $i$-th point is denoted $\mathcal{S}_{-i}$. Prediction on the full dataset takes the form $F(Y)=AK_{Z\,\dotp}$.

Consider the prediction obtained without the $M$-th point (w.l.o.g.) but evaluated on $y_{M}$: $F_{-M}(y_M)$. Define a new dataset $\mathcal{Z} = \mathcal{S}_{-M} \cup \left\{(y_M, F_{-M}(y_M))\right\}$ and compute the loss for it:
\begin{align}
    &L_{\mathcal{Z}}(F_{-M}) = \sum_{i=1}^{M-1}\left\|\psi(z_i) - F_{-M}(y_i) \right\|^2_{\mathcal{H}_Z} + \left\|F_{-M}(y_M) - F_{-M}(y_M) \right\|^2_{\mathcal{H}_Z}\! + \lambda \|F_{-M}\|^2_{\mathcal{G}_{ZY}}\\
    &\qquad= L_{\mathcal{S}_{-M}}(F_{-M}) \leq L_{\mathcal{S}_{-M}}(F) \leq L_{\mathcal{S}_{-M}}(F) + \left\|F_{-M}(y_M) - F(y_M) \right\|^2_{\mathcal{H}_Z}\leq L_{\mathcal{Z}}(F)\,, 
\end{align}
where the first inequality is due to $F_{-M}$ minimizing $L_{\mathcal{S}_{-M}}$. Therefore, $F_{-M}$ also minimizes $L_{\mathcal{Z}}$. As $A$ in the prediction expression $F_{\mathcal{S}}(Y)=AK_{Z\,\cdot}$ depends only on $Y$, and not on $Z$, $F_{-M}$ has to have the same form as the full prediction:
\begin{equation}
    F_{-M}(Y)=AK_{\tilde Z\,\dotp},\quad K_{\tilde z_i,\,\dotp} = \begin{cases} \psi(z_i),\ &i < M\,,\\F_{-M}(y_M),\ &i = M\,.\end{cases}
\end{equation}
This allows us to solve for $F_{-M}(y_M)$:
\begin{align}
    F_{-M}(y_M) \,&= K_{y_MY}\brackets{K_{YY} + \lambda\,I}^{-1}K_{\tilde Z\,\dotp} = \sum_{i=1}^M A_{Mi}\psi(z_i)\\
    &=\sum_{i=1}^{M-1} A_{Mi}\psi(z_i) + A_{MM}\psi(z_i) \pm A_{MM}\psi(z_M)\\
    &=\sum_{i=1}^{M} A_{Mi}\psi(z_i) - A_{MM}\psi(z_M) + A_{MM}\psi(z_i)\\
    &=F_{\mathcal{S}}(y_M) - A_{MM}\psi(z_M) + A_{MM}\psi(z_i) \\
    &= F_{\mathcal{S}}(y_M) - A_{MM}\psi(z_M) + A_{MM}F_{-M}(y_M)\,.
\end{align}
As $A_{MM}$ is a scalar, we can solve for $F_{-M}(y_M)$:
\begin{align}
    F_{-M}(y_M) = \frac{F_{\mathcal{S}}(y_M) - A_{MM}\psi(z_M)}{1 - A_{MM}}
\end{align}
Therefore,
\begin{align}
    \psi(z_M) - F_{-M}(y_M) \,&= \frac{(1 - A_{MM})\psi(z_M) - F_{\mathcal{S}}(y_M) + A_{MM}\psi(z_M)}{1 - A_{MM}} \\
    &= \frac{\psi(z_M) - F_{\mathcal{S}}(y_M)}{1 - A_{MM}}\,.
\end{align}
Taking the norm and summing this result over all points (not just $M$) gives the LOO error.
\end{proof}

\subsection{CIRCE estimators}
\label{app:subsec:circe_est}

\begin{lemma}
For $B$ points and $K_{zz'}^c  = \dotprod{\psi(z) - \condE{Z}{Y}(y)}{\psi(z') - \condE{Z}{Y}(y')}$, the CIRCE estimator
\begin{equation}
    \widehat{\|C_{XZ\mid Y}^c \|^2}_{\mathrm{HS}} = \frac{1}{B(B-1)}\tr\brackets{K_{XX}\changed{(K_{YY} \odot }K^c_{ZZ})}
\end{equation}
has $O(1/B)$ bias and $O_p(1/\sqrt{B})$ deviation from the mean for any fixed probability of the deviation.
\label{app:lemma:circe_true_cond}
\end{lemma}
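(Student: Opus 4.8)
The plan is to recognize the estimator as a second-order V-statistic whose kernel has mean exactly $\|C^c_{XZ\mid Y}\|^2_{\mathrm{HS}}$ on off-diagonal pairs, and then to analyze its bias and its fluctuations separately. First I would expand the squared Hilbert--Schmidt norm using bilinearity of $\dotprod{\cdot}{\cdot}_{\mathrm{HS}}$ together with the tensor-product form $\phi(X)\otimes\psi(Y)\otimes\brackets{\psi(Z)-\mu_{Z\mid Y}(Y)}$ of the CIRCE operator. Writing $C^c_{XZ\mid Y}=\expect\!\left[a(X,Y,Z)\right]$ for this tensor $a$, and taking two independent copies $(X,Y,Z),(X',Y',Z')$, the inner product of the two rank-one tensors factorizes across the three RKHS factors, giving
\begin{equation*}
\|C^c_{XZ\mid Y}\|^2_{\mathrm{HS}} = \expect\!\left[ k_X(X,X')\,k_Y(Y,Y')\,K^c_{ZZ'} \right],
\end{equation*}
since $\dotprod{\psi(Z)-\mu_{Z\mid Y}(Y)}{\psi(Z')-\mu_{Z\mid Y}(Y')}=K^c_{ZZ'}$. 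This identifies the symmetric kernel $h\big((x,y,z),(x',y',z')\big)=k_X(x,x')\,k_Y(y,y')\,K^c_{zz'}$ whose population mean over two independent samples is the quantity we wish to estimate.

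Next I would compute the bias. Expanding the trace shows $\tr\!\big(K_{XX}(K_{YY}\odot K^c_{ZZ})\big)=\sum_{i,j} h\big((x_i,y_i,z_i),(x_j,y_j,z_j)\big)$, with the sum over all ordered pairs, including $i=j$. I split into off-diagonal and diagonal parts: each of the $B(B-1)$ off-diagonal terms has expectation exactly $\|C^c_{XZ\mid Y}\|^2_{\mathrm{HS}}$ (two independent samples), so after dividing by $B(B-1)$ they reproduce the target exactly; each of the $B$ diagonal terms contributes the finite constant $\expect\!\left[ k_X(X,X)\,k_Y(Y,Y)\,\|\psi(Z)-\mu_{Z\mid Y}(Y)\|^2 \right]$, so their total is $O(B)$ and, divided by $B(B-1)$, gives an $O(1/B)$ bias.

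For the fluctuation bound I would treat the off-diagonal part as a genuine second-order U-statistic with a bounded symmetric kernel. Boundedness of $h$ follows from $\sup_x\|\phi(x)\|<\infty$ and $\sup_{z,y}\|\psi(z,y)\|<\infty$, which also bound $\|\mu_{Z\mid Y}(y)\|=\|\condE{\psi(Z)}{Y=y}\|\le\sup_z\|\psi(z)\|$ (the conditional mean lies in the closed convex hull of the feature values), hence $\|\psi(z)-\mu_{Z\mid Y}(y)\|$ and $K^c_{zz'}$ are bounded. A bounded second-order U-statistic concentrates about its mean at rate $O_p(1/\sqrt{B})$---either directly via Hoeffding's inequality for U-statistics, or by bounding its variance by $O(1/B)$ (the leading first-projection term) and applying Chebyshev, which delivers exactly the ``any fixed probability'' $O_p$ guarantee. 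The diagonal contribution is deterministically $O(1/B)=o(1/\sqrt{B})$, so the full estimator deviates from its own mean by $O_p(1/\sqrt{B})$.

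The main obstacle is the bookkeeping around the V- versus U-statistic distinction: one must note that the $1/(B(B-1))$ normalization is the U-statistic normalization while the trace silently includes the diagonal, and verify that the centered feature $\psi(Z)-\mu_{Z\mid Y}(Y)$ is genuinely bounded so that both the diagonal-term constant and the U-statistic concentration are controlled. Once boundedness is established, the remainder is the standard bounded-U-statistic argument.
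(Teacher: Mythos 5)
Your proposal is correct, and the bias part coincides exactly with the paper's argument: both expand $\tr\brackets{K_{XX}(K_{YY}\odot K^c_{ZZ})}$ into off-diagonal pairs (whose expectation is exactly $\|C^c_{XZ\mid Y}\|^2_{\mathrm{HS}}$ by the tensor-product factorization over independent copies) plus $B$ diagonal terms contributing $O(1/B)$ after normalization. Where you diverge is the concentration step. The paper applies McDiarmid's bounded-differences inequality to the entire normalized trace at once: it bounds the change in $\tr\brackets{K_{XX}K_{TT}}$ when a single datapoint $(x_i,t_i)$ is replaced, obtaining $c_i = \frac{4B-2}{B(B-1)}K_{x\,\max}K_{y\,\max}K^c_{z\,\max}$, and then gets an exponential tail $2\exp\brackets{-2\eps^2/(Bc^2)}$ directly, with no need to separate the diagonal from the off-diagonal sum at this stage. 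You instead isolate the off-diagonal part as a bounded second-order U-statistic and invoke either Hoeffding's inequality for U-statistics or a variance bound plus Chebyshev, handling the diagonal by a deterministic $O(1/B)$ bound. Both routes are valid and deliver the stated $O_p(1/\sqrt{B})$ rate: McDiarmid is self-contained and agnostic to the U/V-statistic structure, the Hoeffding-for-U-statistics route gives comparable exponential tails while exploiting that structure, and your Chebyshev fallback gives only polynomial tails but fully suffices for the ``any fixed probability'' formulation of the claim. One point where your writeup is actually more careful than the paper: you explicitly verify that $\|\mu_{Z\mid Y}(y)\|\le \sup_z\|\psi(z)\|$ (Jensen's inequality for the Bochner integral), so that $K^c_{zz'}$ is genuinely bounded; the paper uses the constant $K^c_{z\,\max}$ without comment, relying implicitly on the bounded-kernel assumption of the CIRCE definition.
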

\begin{proof}
The bias is straightforward:
\changed{
\begin{align*}
    &\frac{1}{B(B-1)}\expect\left[\tr\brackets{K_{XX}(K_{YY} \odot K^c_{ZZ})}\right] \\
    =& \frac{1}{B(B-1)}\expect\left[ \sum_{i, j\neq i} K_{x_ix_j}K_{y_iy_j}K^c_{z_iz_j} \right] 
    + \frac{1}{B(B-1)}\expect\left[\sum_i K_{x_ix_i}K_{y_iy_i}K^c_{z_iz_i}\right] \\
    \qquad=& \frac{1}{B(B-1)}\sum_{i, j\neq i} \expect_{xx'yy'zz'}\left[K_{xx'}K_{yy'}K^c_{zz'}\right] + O\brackets{\frac{1}{B}}  \\
    =&\|C_{XZ\mid Y}^c \|^2_{\mathrm{HS}} +O\brackets{\frac{1}{B}}\,.
\end{align*}
}
For the variance, first note that our estimator has bounded differences. \changed{Denote $K_{QQ}=K_{YY} \odot K_{ZZ}^c$ and $q=(z, y)$, if we switch one datapoint $(x_i, q_i)$ to $(x_i', q_i')$ and denote the vectors with switch coordinates as $X^i,Q^i$}
\changed{
\begin{align*}
    &\left|\tr\brackets{K_{XX}K_{QQ}} - \tr\brackets{K_{X^iX^i}K_{Q^iQ^i}}\right| \\
    &\qquad=\left|K_{x_ix_i}K_{q_iq_i} - K_{x_i'x_i'}K_{q_i'q_i'} + 2\sum_{j\neq i}\brackets{K_{x_jx_i}K_{q_jq_i} - K_{x_jx_i'}K_{q_jq_i'}}\right|\\
    &\qquad\leq (2 + 4 (B-1))K_{x\,\max}K_{q\,\max} 
    \leq (4B - 2)K_{x\,\max}K_{y\,\max}K^c_{z\,\max}\,.
\end{align*}
}
Therefore, for any index $i$
\changed{
\begin{align*}
    &\frac{1}{B(B-1)}\left|\tr\brackets{K_{XX}\left(K_{YY} \odot K^c_{ZZ}\right)} - \tr\brackets{K_{X^iX^i}\left(K_{Y^iY^i} \odot K^c_{Z^iZ^i}\right)}\right| \\
    \leq & \frac{4B - 2}{B(B-1)}K_{x\,\max}K_{y\,\max}K^c_{z\,\max}.
\end{align*}
}
\changed{
We can now use McDiarmid's inequality \citep{mcdiarmid1989method} with 
\begin{equation*}
    c=c_i = \frac{4B - 2}{B(B-1)}K_{x\,\max}K_{y\,\max}K^c_{z\,\max}\,,
\end{equation*}
meaning that for any $\eps > 0$
\begin{align*}
    &\mathrm{P}\brackets{\left|\frac{\tr\brackets{K_{XX}K_{QQ}}}{B(B-1)} - \expect \frac{\tr\brackets{K_{XX}K_{QQ}}}{B(B-1)}\right| \geq \eps} \leq 2\,\exp\brackets{- \frac{2\eps^2}{Bc^2}} \\
    &\qquad\qquad= 2\,\exp\brackets{- \frac{2\eps^2 B(B-1)^2}{(4B-2)^2 K_{x\,\max}^2K_{y\,\max}^2K^{2c}_{z\,\max}}}\,.
\end{align*}
}
Therefore, for any fixed probability the deviation $\eps$ from the mean decays as $O(1/\sqrt{B})$.
\end{proof}

\begin{definition}
\changedmore{A $(\beta,p)$-kernel for a given data distribution satisfies the following conditions (see \cite{fischer2020sobolev,li2022optimal} for precise definition using interpolation spaces): 

\begin{enumerate}
    \item[(EVD)] Eigenvalues $\mu_i$ of the covariance operator $C_{YY}$ decay as $\mu_i\leq c\cdot i^{-1/p}$.
    \item[(EMB)] For $\alpha\in(p,1]$, the inclusion map $[\mathcal{H}^\alpha_Y\hookrightarrow L_\infty(\pi)]$ is continuous and bounded by $A$.
    \item[(SRC)] $F\in [\mathcal{G}]^\beta$ for $\beta\in[1, 2]$ (note that $\beta<1$ would include the misspecified setting).
\end{enumerate}
}
\end{definition}

\begin{lemma} Consider the well-specified case of conditional expectation estimation \citep[see][]{li2022optimal}.
For \changedmore{bounded kernels over $X,Z,Y$ and a $(\beta,p)$-kernel over $Y$}, $F(y) =\condE{\psi(Z)}{Y}(y)$, bounded $\|F\|\leq C_F$, and $M$ points used to estimate $F$, define the conditional expectation estimate as
\begin{equation}
    \hat F(y) = K_{yY}\brackets{K_{YY} + \lambda_M I}^{-1}K_{Z\,\dotp}\,,
\end{equation}
where $\lambda_M = \Theta(1/M^{\changedmore{\beta+p}})$.

Then, the estimator $\tr\brackets{K_{XX}\hat K_{ZZ}^c}/(B(B-1))$ of the ``true'' CIRCE estimator (i.e., with the actual conditional expectation) deviates from the true value as $O_p(1/M^{\changedmore{(\beta-1)/(2(\beta+p))}})$.
\label{app:lemma:circe_est_cond}
\end{lemma}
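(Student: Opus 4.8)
The plan is to control the difference between the CIRCE estimator built from the estimated conditional mean $\hat F$ and the one built from the true conditional mean $F$ (which is the ``true value'' the previous lemma already tied to $\norm{C_{XZ\mid Y}^c}_{\mathrm{HS}}^2$), expressing everything through the approximation error $\Delta(y) := \hat F(y) - F(y)$ and then importing its convergence rate from \citet{li2022optimal}. Writing $u_i := \psi(z_i) - F(y_i)$ and $\hat u_i := \psi(z_i) - \hat F(y_i) = u_i - \Delta(y_i)$, the entrywise gap of the centered Gram matrices expands as
\begin{equation*}
    \hat K^c_{z_iz_j} - K^c_{z_iz_j}
    = \dotprod{\hat u_i}{\hat u_j} - \dotprod{u_i}{u_j}
    = -\dotprod{u_i}{\Delta(y_j)} - \dotprod{\Delta(y_i)}{u_j} + \dotprod{\Delta(y_i)}{\Delta(y_j)}\,,
\end{equation*}
that is, a term linear in $\Delta$ plus a term quadratic in $\Delta$.

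First I would substitute this expansion into the estimator difference $\frac{1}{B(B-1)}\tr\brackets{K_{XX}\brackets{K_{YY}\odot(\hat K^c_{ZZ} - K^c_{ZZ})}}$ and apply the triangle inequality. Using that all kernels are bounded, so that $|K_{x_ix_j}|\le K_{x\,\max}$, $|K_{y_iy_j}|\le K_{y\,\max}$, and $\norm{u_i}\le \sup_z\norm{\psi(z)} + \sup_y\norm{F(y)} < \infty$ (the latter because $F$ is a conditional mean of a bounded feature map), and applying Cauchy--Schwarz to each inner product, every summand is bounded by $\sup_y\norm{\Delta(y)}_{\mathcal{H}_Z}$ for the linear part and by its square for the quadratic part. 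Counting the $O(B^2)$ summands against the $1/(B(B-1))$ normalization, the whole difference is $O\brackets{\sup_y\norm{\Delta(y)}_{\mathcal{H}_Z}}$, with the quadratic contribution higher-order once $\sup_y\norm{\Delta(y)}\to 0$.

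It then remains to control $\sup_y\norm{\Delta(y)}_{\mathcal{H}_Z}$, and here I would invoke \citet{li2022optimal}: under the $(\beta,p)$-kernel assumptions with $\lambda_M = \Theta(1/M^{\beta+p})$, the kernel-ridge estimate of the conditional mean embedding converges to $F$ in the native RKHS norm at rate $O_p\brackets{1/M^{(\beta-1)/(2(\beta+p))}}$. Since the kernel on $\mathcal{Y}$ is bounded, evaluation in the vector-valued RKHS $\mathcal{G}_{ZY}$ is a bounded operation (equivalently, the EMB embedding into $L_\infty(\pi)$ is continuous), so the RKHS-norm bound upgrades to the uniform bound $\sup_y\norm{\Delta(y)}_{\mathcal{H}_Z} = O_p\brackets{1/M^{(\beta-1)/(2(\beta+p))}}$. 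Combined with the previous paragraph, this yields the claimed rate.

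I expect the main obstacle to sit entirely in this last step: the Gram-matrix expansion and the kernel-boundedness estimates are routine, but pinning down the correct convergence rate of the conditional-mean-embedding estimator in the right norm, and legitimately converting it from an RKHS statement into the uniform-in-$y$ bound that the expansion needs, is where the $(\beta,p)$-interpolation-space machinery of \citet{li2022optimal,fischer2020sobolev} does the real work. A secondary point to handle carefully is that $\Delta$ is a function of the holdout sample whereas the sum runs over the independent batch sample, so the $O_p$ must be read as holding over the holdout randomness uniformly in the batch; the uniform bound on $\sup_y\norm{\Delta(y)}$ is exactly what makes this decoupling clean.
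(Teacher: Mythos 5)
Your proposal is correct and follows essentially the same route as the paper's proof: the same entrywise expansion of $\hat K^c_{ZZ}-K^c_{ZZ}$ into terms linear and quadratic in the error, the same bounded-kernel/Cauchy--Schwarz estimates, and the same invocation of Theorem~2 of \citet{li2022optimal} with $\lambda_M=\Theta(1/M^{\beta+p})$ to get the $O_p(M^{-(\beta-1)/(2(\beta+p))})$ rate. The only cosmetic difference is that you upgrade the vector-valued RKHS bound to a uniform-in-$y$ bound via boundedness of the $Y$ kernel, whereas the paper applies the vector-valued reproducing property $\dotprod{F(y)}{h}=\dotprod{F}{K_y h}_{\mathcal{G}}$ entry by entry --- these are the same estimate packaged differently.
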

\begin{proof}
First, decompose the difference:
\begin{align}
    &\tr\brackets{K_{XX}K_{ZZ}^c} - \tr\brackets{K_{XX}\hat K_{ZZ}^c} = \tr\brackets{K_{XX}\brackets{K_{ZZ}^c - K_{ZZ}^c}} \\
    &\qquad= \tr\brackets{K_{XX}\left[\brackets{K_{ZZ}^c - \hat K_{ZZ}^c}\odot K_{YY}\right]} = \tr\brackets{\left[K_{XX}\odot K_{YY}\right]\brackets{K_{ZZ}^c - \hat K_{ZZ}^c}}\,,
\end{align}
where in the last line we used that all matrices are symmetric.

Let's concentrate on the difference:
\begin{align}
    &\brackets{K_{ZZ}^c - \hat K_{ZZ}^c}_{ij} = \dotprod{\hat F(y_i) - F(y_i)}{\psi(z_j)} + \dotprod{\hat F(y_j) - F(y_j)}{\psi(z_i)}\\
    &\qquad+ \dotprod{F(y_i)}{F(y_j)} - \dotprod{\hat F(y_i)}{\hat F(y_j)\pm F(y_j)}\\
    &\qquad=\dotprod{\hat F(y_i) - F(y_i)}{\psi(z_j)} + \dotprod{\hat F(y_j) - F(y_j)}{\psi(z_i)} \\
    &\qquad+\dotprod{F(y_i) - \hat F(y_i)}{F(y_j)} - \dotprod{\hat F(y_i)}{\hat F(y_j)- F(y_j)}\\
    &\qquad=\dotprod{F(y_i) - \hat F(y_i)}{F(y_j) - \psi(z_j)} + \dotprod{F(y_j)- \hat F(y_j)}{\hat F(y_i) - \psi(z_j)}\,.
\end{align}

As we're working in the well-specified case, by definition the operator $F\in\mathcal{G}$, where $\mathcal{G}$ is a vector-valued RKHS \cite[Definition 1]{li2022optimal}. This implies that for the function $[K_xh](\cdot)=K(\cdot,x)h$ (where $h\in\mathcal{H}_{y}$),
\begin{equation}
    \dotprod{F(x)}{h} = \dotprod{F}{K_xh}_{\mathcal{G}}\,.
\end{equation}

We can now re-write the difference as
\begin{align}
    &\brackets{K_{ZZ}^c - \hat K_{ZZ}^c}_{ij} = \dotprod{F - \hat F}{K_{y_i}\brackets{F(y_j) - \psi(z_j)} + K_{y_j}\brackets{\hat F(y_i) - \psi(z_j)}}_{\mathcal{G}}\,.
\end{align}

We can use the triangle inequality and then Cauchy-Schwarz to obtain
\begin{align}
    &\left|\brackets{K_{ZZ}^c - \hat K_{ZZ}^c}_{ij} \right| \leq \|F - \hat F\|_{\mathcal{G}}\brackets{\left\| K_{y_i}\brackets{F(y_j) - \psi(z_j)} \right\|_{\mathcal{G}} + \left\| K_{y_j}\brackets{\hat F(y_i) - \psi(z_j)} \right\|_{\mathcal{G}}}\\
    &\qquad= \|F - \hat F\|_{\mathcal{G}}\brackets{k(y_i, y_i) \|F(y_j) - \psi(z_j)\|_{\mathcal{H}_Z} + k(y_j, y_j) \|\hat F(y_i) - \psi(z_j)\|_{\mathcal{H}_Z}}\\
    &\qquad\leq C_1\,\|F - \hat F\|_{\mathcal{G}}\brackets{C_2 + C_3\,\|F - \hat F\|_{\mathcal{G}}}\,,
\end{align}
for some positive constants $C_{1,2,3}$ (since the kernels over both $z$ and $y$ are bounded, $F$ is bounded too and hence $\|\hat F\|\leq \|\hat F - F\| + \|F\|$.

As all kernels are bounded, 
\begin{align}
    \frac{\abs{\tr\brackets{\left[K_{XX}\odot K_{YY}\right]\brackets{K_{ZZ}^c - \hat K_{ZZ}^c}}}}{B(B-1)} \leq C_1C_4\,\|F - \hat F\|_{\mathcal{G}}\brackets{C_2 + C_3\,\|F - \hat F\|_{\mathcal{G}}}
\end{align}
for positive constants $C_1$ to $C_4$. 

Now we can use Theorem 2 of \cite{li2022optimal} with \changedmore{$\gamma=1$} and $\lambda = \Theta(1/M^{\changedmore{\beta+p}})$, which shows that 
\begin{align}
    \mathrm{P}\brackets{\|F - \hat F\|_{\mathcal{G}} \leq \tau\sqrt{K}M^{\changedmore{-\frac{\beta-1}{2(\beta+p)}}}} \geq 1 - 4e^{-\tau}\,,
\end{align}
for some positive constant $K$, which gives us the $O_p(1/M^{\changedmore{\frac{\beta-1}{2(\beta+p)}}})$ deviation.
\end{proof}

Now we can combine the two lemmas to prove \cref{th:estimator}:
\begin{proof}[Proof of \cref{th:estimator}]\label{proof_est}
Combining \cref{app:lemma:circe_true_cond} and \cref{app:lemma:circe_est_cond} and using a union bound, we obtain the $O_p(1/\sqrt{B} + 1/M^{\changedmore{\frac{\beta}{2(\beta+p)}}})$ rate.
\end{proof}

\begin{corollary}
For $B$ points and $M$ holdout points, the CIRCE estimator
\begin{equation}
    \widehat{\textrm{CIRCE}} = \frac{1}{B(B-1)}\tr\brackets{\tilde K_{XX} \left(\tilde K_{YY}\odot \hat{\tilde K}_{ZZ}^c\right)}\,,\quad \tilde A = A - \mathrm{diag}(A)\,,
\end{equation}
converges as $O_p(1/\sqrt{B} + 1/M^{\changedmore{\frac{\beta-1}{2(\beta+p)}}})$.
\label{app:lemma:circe_true_cond_debiased}
\end{corollary}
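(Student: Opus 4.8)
The plan is to follow the proof of \cref{th:estimator} almost verbatim, since the debiased estimator differs from the one analyzed in \cref{app:lemma:circe_true_cond} only by the removal of diagonal entries. The first step is to note that, writing $\tilde A = A - \mathrm{diag}(A)$ and using symmetry of all three kernel matrices, the trace collapses to a sum over distinct index pairs,
\[
\tr\brackets{\tilde K_{XX}\brackets{\tilde K_{YY}\odot \hat{\tilde K}_{ZZ}^c}} = \sum_{i\neq j} K_{x_ix_j}\,K_{y_iy_j}\,\brackets{\hat K_{ZZ}^c}_{ij},
\]
because every term with $i=j$ contains a zeroed diagonal factor. Dividing by $B(B-1)$ therefore produces exactly the off-diagonal U-statistic, and the analysis splits as before into a ``true conditional mean'' contribution and a ``conditional-mean estimation error'' contribution, which I would treat by re-running \cref{app:lemma:circe_true_cond} and \cref{app:lemma:circe_est_cond} respectively.

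For the debiased analogue of \cref{app:lemma:circe_true_cond} (using the true $\mu_{Z\mid Y}$), I would first observe that the estimator is now \emph{exactly} unbiased: taking expectations term-by-term over the $B(B-1)$ distinct pairs gives $\expect_{xx'yy'zz'}[K_{xx'}K_{yy'}K^c_{zz'}] = \norm{C_{XZ\mid Y}^c}^2_{\mathrm{HS}}$, so the $O(1/B)$ bias coming from the diagonal terms $K_{x_ix_i}K_{y_iy_i}K^c_{z_iz_i}$ of the biased estimator has been removed by construction. I would then re-run the bounded-differences computation: swapping a single sample $(x_i,t_i)\to(x_i',t_i')$ with $t=(y,z)$ now changes only the $2\sum_{j\neq i}(\cdots)$ cross terms (the diagonal term is absent), so the bounded-difference constant is at most $4(B-1)K_{x\max}K_{y\max}K^c_{z\max}$, which is of the same order as in \cref{app:lemma:circe_true_cond}. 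Applying McDiarmid's inequality \citep{mcdiarmid1989method} exactly as there yields an $O_p(1/\sqrt B)$ deviation from the (now exact) mean.

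For the debiased analogue of \cref{app:lemma:circe_est_cond}, the key point is that the entrywise bound $\abs{(K_{ZZ}^c - \hat K_{ZZ}^c)_{ij}} \le C_1\norm{F-\hat F}_{\mathcal G}(C_2 + C_3\norm{F-\hat F}_{\mathcal G})$ derived there holds for every $(i,j)$ and is insensitive to the diagonal structure. Writing the difference between the debiased-true and debiased-estimated statistics as $\frac{1}{B(B-1)}\sum_{i\neq j}K_{x_ix_j}K_{y_iy_j}(K_{ZZ}^c-\hat K_{ZZ}^c)_{ij}$, I would bound it by the same expression as in \cref{app:lemma:circe_est_cond} (summing the per-entry bound over the $B(B-1)$ off-diagonal pairs and dividing by $B(B-1)$ preserves the bound), and invoke Theorem 2 of \citet{li2022optimal} with $\lambda = \Theta(1/M^{\beta+p})$ to obtain the $O_p(1/M^{(\beta-1)/(2(\beta+p))})$ rate. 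A union bound over the two contributions, as in the proof of \cref{th:estimator}, then gives the claimed $O_p(1/\sqrt B + 1/M^{(\beta-1)/(2(\beta+p))})$.

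I do not expect a genuine obstacle here, as the argument is essentially a rerun of the two lemmas. The only points that require care are (i) verifying that the bounded-differences argument survives the removal of the diagonal term, which it does, and in fact the constant can only shrink; and (ii) observing that although debiasing eliminates the $O(1/B)$ bias, the overall rate is unchanged, since that bias was already dominated by the $O_p(1/\sqrt B)$ fluctuation. Thus the corollary records that the unbiased estimator, which is typically preferable in practice, enjoys the same guarantee as \cref{th:estimator}.
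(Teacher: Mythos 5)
Your proposal is correct and matches the paper's proof, which simply states that the corollary ``follows from the previous two proofs'' (\cref{app:lemma:circe_true_cond} and \cref{app:lemma:circe_est_cond}); your write-up is a faithful elaboration of exactly that argument, correctly noting that diagonal removal makes the first lemma's estimator exactly unbiased, shrinks the bounded-difference constant, and leaves the entrywise estimation-error bound untouched.
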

\begin{proof}
This follows from the previous two proofs.
\end{proof}
\begin{corollary}
For $B$ points and $M$ holdout points, the CIRCE estimator
\begin{equation}
    \widehat{\textrm{CIRCE}} = \frac{1}{B(B-1)}\tr\brackets{HK_{XX} H\left(K_{YY}\odot \hat K_{ZZ}^c\right)}\,,\quad H=I - \frac{1}{B}1_B1_B\trans
\end{equation}
has bias of $O(1/B)$ and converges as $O_p(1/\sqrt{B} + 1/M^{\changedmore{\frac{\beta-1}{2(\beta+p)}}})$.
\label{app:lemma:circe_true_cond_centered}
\end{corollary}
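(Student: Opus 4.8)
The plan is to treat the centering matrix $H=I-\tfrac1B\mathbf1\mathbf1\trans$ as a bounded, data-independent perturbation and reduce the claim to \cref{app:lemma:circe_true_cond} and \cref{app:lemma:circe_est_cond}, handling the bias and the fluctuation by separate arguments. Write $M=K_{YY}\odot K^c_{ZZ}$ for the matrix built with the \emph{true} conditional mean (the target estimator replaces $M$ by $\hat M=K_{YY}\odot\hat K^c_{ZZ}$). Since $H$ is symmetric and idempotent, $\tr\brackets{HK_{XX}HM}=\tr\brackets{K_{XX}\,HMH}$, and expanding the rank-one part of $H$ gives
\begin{equation*}
\tr\brackets{HK_{XX}HM}=\tr\brackets{K_{XX}M}-\tfrac2B\,\mathbf1\trans K_{XX}M\mathbf1+\tfrac1{B^2}\brackets{\mathbf1\trans K_{XX}\mathbf1}\brackets{\mathbf1\trans M\mathbf1}.
\end{equation*}
I would analyze the leading term with \cref{app:lemma:circe_true_cond}, show each correction is $O(1/B)$ in expectation for the bias, and bound the fluctuation of the whole statistic directly by McDiarmid.

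\textbf{Bias.} Normalized by $1/(B(B-1))$, the first term has expectation $\|C_{XZ\mid Y}^c\|^2_{\mathrm{HS}}+O(1/B)$ by \cref{app:lemma:circe_true_cond}. The crux for the corrections is the conditional-centering identity $\condE[Z_i]{\psi(Z_i)-\mu_{Z\mid Y}(Y_i)}{Y_i}=0$, which yields $\condE[Z_j]{K^c_{z_iz_j}}{Y_j}=0$ whenever the index $j$ appears in $K^c$ but its $Z$-variable appears nowhere else in the summand. In each correction the extra contraction against $\mathbf1$ introduces exactly such a ``free'' $Z$-index, so that (using that the samples are i.i.d.\ and that $k(X_k,X_l)$ does not involve $Z_j$ for $j\notin\{k,l\}$) every summand with a free $Z$-index vanishes in expectation; only coincidence terms survive. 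For the linear term this leaves the $Z$-diagonal sum $\sum_{i,l}K_{x_ix_l}M_{ll}=O(B^2)$, and for the quartic term the factor $\tfrac1{B^2}\mathbf1\trans K_{XX}\mathbf1\le K_{x\,\max}$ (bounded surely) multiplies $\E[\mathbf1\trans M\mathbf1]=\E[\sum_i M_{ii}]+O(1)=O(B)$. After the explicit $1/B$, $1/B^2$ factors and the overall $1/(B(B-1))$ normalization, each correction contributes $O(1/B)$, so the bias is $O(1/B)$.

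\textbf{Fluctuation and conditional-mean error.} For the deviation from the mean I would apply McDiarmid to the centered statistic directly, as in \cref{app:lemma:circe_true_cond}. Writing it as $\tr(K_{XX}\,HMH)/(B(B-1))$, the entries of $HMH$ are $O(1)$ (each is $M_{kl}$ minus two row/column averages plus the grand mean, all bounded), and changing one datapoint perturbs the trace by $O(B)$: the $O(B)$ entries in the altered row and column contribute $O(1)$ each, while the remaining $O(B^2)$ entries of $HMH$ each shift by only $O(1/B)$. Hence the bounded-difference constants are $O(1/B)$, exactly as in the uncentered case, giving an $O_p(1/\sqrt B)$ deviation. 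For the conditional-mean estimation error, the difference from replacing $M$ by $\hat M$ equals $\tr\brackets{((HK_{XX}H)\odot K_{YY})(K^c_{ZZ}-\hat K^c_{ZZ})}/(B(B-1))$ via the identity $\tr(A(B\odot C))=\tr((A\odot B)C)$; since $(HK_{XX}H)\odot K_{YY}$ still has bounded entries, the per-entry Cauchy--Schwarz bound of \cref{app:lemma:circe_est_cond} applies verbatim to yield $O_p(1/M^{(\beta-1)/(2(\beta+p))})$. A union bound over the two error sources, as in the proof of \cref{th:estimator}, then gives the stated rate.

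\textbf{Main obstacle.} The delicate step is the bias: a crude boundedness bound on the corrections only gives $O(1)$, so the reduction to $O(1/B)$ genuinely relies on the conditional-centering cancellation of free-$Z$-index summands. Extra care is needed in the quartic correction $\tfrac1{B^2}(\mathbf1\trans K_{XX}\mathbf1)(\mathbf1\trans M\mathbf1)$, where $X$ and $Z$ may be dependent so the two factors cannot be separated; here one controls $\tfrac1{B^2}\mathbf1\trans K_{XX}\mathbf1$ surely before taking the expectation of $\mathbf1\trans M\mathbf1$, and checks that the off-diagonal part of $\mathbf1\trans M\mathbf1$ contributes only $O(1)$ because nonzero expected summands require both $Z$-indices to be blocked from freeing, i.e.\ matched into the $X$-kernel arguments.
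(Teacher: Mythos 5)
Your proposal is correct and takes essentially the same approach as the paper: the paper's entire proof is the assertion that the result ``follows from the previous two proofs and the fact that $K^c$ is a centered matrix, meaning that in expectation $HK^cH=K^c$,'' which is exactly the conditional-centering cancellation you develop. Your write-up in fact supplies the details the paper leaves implicit (the expansion of $H$, the free-index counting for the bias, the McDiarmid and per-entry Cauchy--Schwarz transfer), with only cosmetic slips -- e.g.\ the linear correction also retains the $j=i\neq l$ coincidence terms, and the ``sure bound then expectation'' step in the quartic term implicitly uses that $K_{YY}\odot K^c_{ZZ}$ is positive semi-definite -- neither of which affects the $O(1/B)$ and $O_p(1/\sqrt{B}+1/M^{(\beta-1)/(2(\beta+p))})$ conclusions.
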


\begin{proof}
This follows from the previous two proofs and the fact that $K^c$ is a centered matrix, meaning that in expectation $HK^cH=K^c$. 
\end{proof}
This estimator can be less biased in practice, as $\hat K^c_{ZZ}$ is typically biased due to conditional expectation estimation, and $H\hat K^cH$ re-centers it.

\section{Random Fourier features}
\label{app:sec:rff}
Random Fourier features (RFF) \cite{rff} allow to approximate a kernel $k(x_1,x_2)\approx \frac{1}{D}\sum_{i=1}^D r_i(x_1)\trans r_i(x_2)$, and therefore $K=RR\trans$. 

The algorithm to estimate CIRCE with RFF is provided in \cref{algo:circe_rff}. \changedmore{We sample $D_0$ points every $L$ iterations, but in every batch only use $D$ of them to reduce computational costs.} It takes $O(D_0 M^2 + D_0^2 M)$ to compute $W_1^r$ and $W_2^r$ every $L$ iterations. At each iteration, it takes $O(BD^2 + B^2D)$ to compute CIRCE. Therefore, average (per iteration) cost of RFF estimation becomes $O(\frac{D_0}{L}M^2 + \frac{D_0^2}{L}M + BD^2 + B^2D)$. 

\begin{algorithm}[h]
\caption{Estimation of CIRCE with random Fourier features}\label{alg:circe}
\begin{algorithmic}
\State Holdout data $\{(z_i, y_i)\}_{i=1}^M$, mini-batch $\{(x_i, z_i, y_i)\}_{i=1}^B$
\State \textbf{Holdout data}
\State Leave-one-out (\cref{th:loo_kme}) for $\lambda$ \changedmore{(ridge parameter)} and  $\sigma_y$ \changedmore{(parameters of $Y$ kernel)}:
\State $\lambda,\  \sigma_y = \argmin \sum_{i=1}^M \frac{\left\|\psi(z_i) - \changed{K_{y_iY}\brackets{K_{YY} + \lambda I}^{-1}K_{Z\,\dotp}} \right\|^2_{\mathcal{H}_z}}{\brackets{1 - \brackets{K_{YY}\brackets{K_{YY} + \lambda\,I}^{-1}}_{ii}}^2}$
\State $W_1=\brackets{K_{YY} + \lambda I}^{-1},\ W_2 = W_1K_{ZZ}W_1$  %
\State \textbf{Every $L$ mini-batches}
\State Sample $D_0$ RFF $R(\cdot)$
\State $W_1^r = R(Y)\trans W_1 R(Z),\ W_2^r = R(Z)\trans W_2 R(Z)$
\State \textbf{Mini-batch}
\State Use $D$ random RFF out of $D_0$
\State Compute $R(y),R(z)$ (mini-batch)
\State $\hat K^c = K_{yy}\odot \brackets{K_{zz} - R(y) W_1^r R(z)\trans - \brackets{R(y) W_1^r R(z)\trans}\trans + R(y) W_2^r R(y)\trans}$
\State $\mathrm{CIRCE}=\frac{1}{B(B-1)}\tr\brackets{HK_{xx}H\hat K^c},\ H=I - \frac{1}{B}1_B1_B\trans$
\end{algorithmic}
\label{algo:circe_rff}
\end{algorithm}

\section{Synthetic Data and Additional Results}
\label{app:synthetic}
We used Adam \citep{kingma2014adam} for optimization with batch size 256, and trained the network for 100 epochs. For experiments on univariate datasets, the learning rate was 1e-4 and weight decay was $0.3$; for experiments on multivariate datasets, the learning rate was 3e-4 and  weight decay was $0.1$. 
We implemented CIRCE with random Fourier features \citep{rff} (see \cref{app:sec:rff}) of dimension 512 for Gaussian kernels.
 We swept over the hyperparameters, including RBF scale, regularization weight for ridge regression, and regularization weight for the conditional independence regularization strength.

All synthetic datasets are using the same causal structure as shown in \Cref{fig:scm_synthetic}. Hyperparameters sweep is listed in \Cref{tab:hyper_synthetic} and it is the same for all test cases.

\begin{table}[h!]
\begin{center}
\begin{tabular}{lcc}
    \toprule
    Parameter & \multicolumn{2}{c}{Values} \\
    \cmidrule{2-3}
    &CIRCE and HSCIC&GCM\\
    \midrule
    conditional independence $\gamma$ & log space between $[1, 10^4]$; & log space between $[ 10^{-2}, 10^{-0.5}]$ \\ 
    ridge regression $\lambda$ & \multicolumn{2}{c}{\{ 0.001, 0.01, 0.1, 1 \}} \\ 
    RBF scale & \multicolumn{2}{c}{\{ 0.001, 0.01, 0.1, 1 \}} \\ 
    \bottomrule
\end{tabular}
\end{center}
\caption{Hyperparameters for CIRCE, HSCIC and GCM on synthetic datasets.}
\label{tab:hyper_synthetic}
\end{table}

\subsection{Univariate Cases} 
Structural causal model for univariate case 1:
\begin{align*}
    & Y, \epsilon_Z \sim \mathcal{N}(0, 1)\\
    & \epsilon_A, \epsilon_B \sim \mathcal{N}(0,0.1)\\
    & Z = Y^2 + \epsilon_{Z}\\
    & A = 0.5 Z \epsilon_A + 2 Y \\
    & B = 0.5 \exp{(-AY)} \sin(2AY) + 5Z + 0.2 \epsilon_B
\end{align*}
Structural causal model for univariate case 2:
\begin{align*}
    & Y, \epsilon_Z \sim \mathcal{N}(0, 1)\\
    & \epsilon_A, \epsilon_B \sim \mathcal{N}(0,0.1)\\
    & Z = Y^2 + \epsilon_{Z}\\
    & A = \exp(-0.5 Z^2) \sin{2Z}+2Y+0.2\epsilon_A\\
    & B = \sin(2AY) \exp(-0.5AY) + 5Z + 0.2 \epsilon_B
\end{align*}

\subsection{Multivariate Cases}
Structural causal model for multivariate case 1:
\begin{align*}
    & Y, \epsilon_{Z_i} \sim \mathcal{N}(0, 1)\\
    & \epsilon_A, \epsilon_B \sim \mathcal{N}(0,0.1)\\
    & Z_i = Y^2 + \epsilon_{Z_i}\\
    & A = \exp(-0.5 Z_1) + \sum_i Z_i \sin(Y) + 0.1 \epsilon_A \\
    & B = \exp(-0.5 Z_2) (\sum_i Z_i) + AY + 0.1 \epsilon_B
\end{align*}
Structural causal model for multivariate case 2:
\begin{align*}
    & Y_i, \epsilon_Z \sim \mathcal{N}(0, 1)\\
    & \epsilon_A, \epsilon_B \sim \mathcal{N}(0,0.1)\\
    & Z = Y^T Y + \epsilon_{Z}\\
    & A = \exp(-0.5 Z) + \sin{\sum_i Y_i} Z + 0.1\epsilon_A\\
    & B = \exp(-0.5 Z) Z + \sum_i Y_i + Z + AY_1 + 0.1 \epsilon_B
\end{align*}

\section{Image Data Details}
\label{app:image_exp}

\begin{figure}[h]
    \centering
    \includegraphics[width=0.9\textwidth]{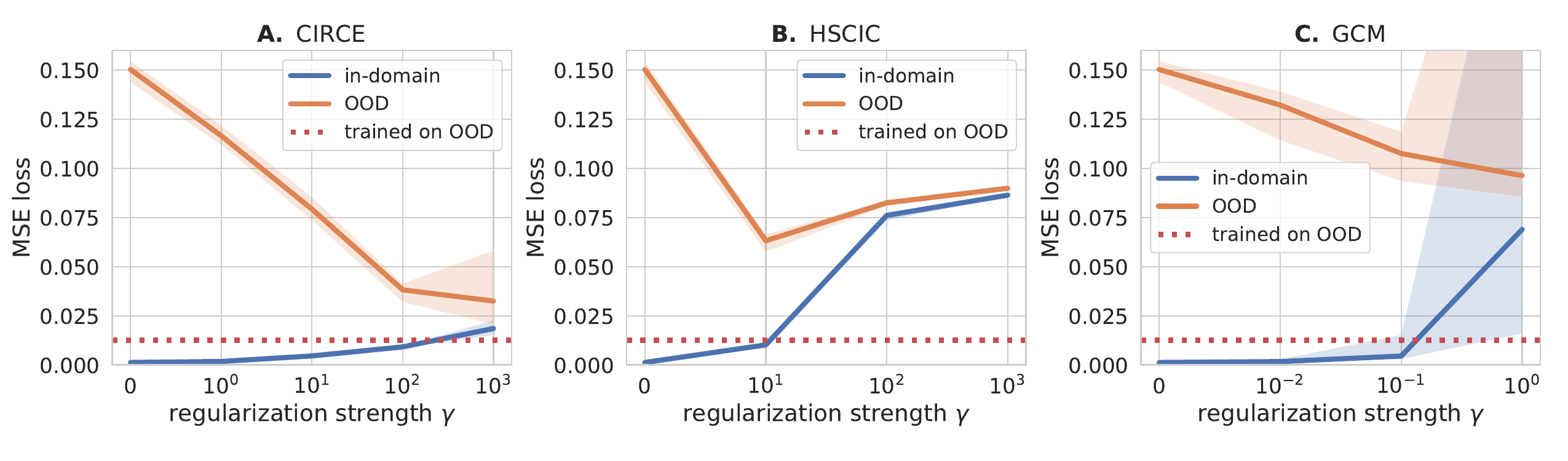}
    \caption{dSprites with nonlinear dependence. \changed{CIRCE used holdout data in training.} Blue: in-domain test loss; orange: out-of-domain loss (OOD); red: loss for OOD-trained encoder. Solid lines: median over 10 seeds; shaded areas: min/max values.}
    \label{fig:circe_nonlin_no_ho}
\end{figure}

For both dSpritres and Yale-B, we choose the following training hyperparameters over the validation set and \textit{without} regularization: weight decay (1e-4, 1e-2), learning rate (1e-4, 1e-3, 1e-2) and length of training (200 or 500 epochs). These parameters are used for all runs (including the regularized ones). For dSprites the batch size was 1024. For Yale-B the batch size was 256.  The results for both standard \changed{(\cref{app:lemma:circe_true_cond_debiased})} and centered (\cref{app:lemma:circe_true_cond_centered}) CIRCE estimators were similar for \changed{dSprites} (the reported one is standard), but the centered version \changed{was more stable for Yale-B} (the reported one is centered). This is likely due to the bias arising from conditional expectation estimation. \changed{For dSprites, the training set contained 589824 points, and the holdout set size was 5898 points. For Yale-B, the training set contained 11405 points, and the holdout set size was 1267 points.}

All kernels were Gaussian: $k(x, x')=\exp(-\|x-x'\|^2/(2\sigma^2))$. For $Y$, $\sigma^2$ from $[1.0, 0.1, 0.01, 0.001]$ and ridge regression parameter $\lambda$ from $[0.01, 0.1, 1.0, 10.0, 100.0]$. The other two kernels had $\sigma^2=0.01$ for linear and y-cone dependencies; for the nonlinear case, the kernel over $Z$ had $\sigma^2=1$ due to a different scaling of the distractor in that case. 

\changed{We additionally tested a setting in which the $M$ holdout points used for conditional expectation estimation are not removed from the training data for CIRCE. As shown in \cref{fig:circe_nonlin_no_ho} for dSprites with non-linear dependence, this has little effect on the performance.}

\end{document}